\pdfoutput=1
\documentclass[11pt]{article}

\usepackage[final]{acl}

\usepackage{times}
\usepackage{latexsym}

\usepackage[T1]{fontenc}
\usepackage[utf8]{inputenc}

\usepackage{microtype}

\usepackage{inconsolata}
\usepackage{amsthm}
\newtheorem{proposition}{Proposition}
\usepackage{graphicx}
\usepackage{fontawesome5}
\usepackage{xcolor}
\usepackage{mdframed}
\usepackage[most]{tcolorbox}
\usepackage{subcaption} % 用于插入子图
\usepackage{hyperref}
\usepackage{booktabs}
\usepackage{algorithm}
\usepackage[noend]{algorithmic}
\usepackage{amssymb}
\usepackage{CJKutf8}
\usepackage{colortbl}
\usepackage{multirow} 
\usepackage{amsmath}
\title{Efficient Provably Secure Linguistic Steganography via Range Coding}

\author{Ruiyi Yan \and Yugo Murawaki \\
        Graduate School of Informatics, Kyoto University \\ \texttt{ruiyi@nlp.ist.i.kyoto-u.ac.jp}, \texttt{murawaki@i.kyoto-u.ac.jp}}

\begin{document}
\maketitle
\begin{abstract}
%background
Linguistic steganography involves embedding secret messages within seemingly innocuous texts to enable covert communication. Provable security, which is a long-standing goal and key motivation, has been extended to language-model-based steganography.
%Challenge
Previous provably secure approaches have achieved perfect imperceptibility, measured by zero Kullback-Leibler (KL) divergence, but at the expense of embedding capacity.
%Methods
In this paper, we attempt to directly use a classic entropy coding method (\textbf{range coding}) to achieve secure steganography, and then propose an efficient and provably secure linguistic steganographic method with a rotation mechanism.
%Experiments
Experiments across various language models show that our method achieves around 100\% entropy utilization (embedding efficiency) for embedding capacity, outperforming the existing baseline methods. Moreover, it achieves high embedding speeds (up to 1554.66 bits/s on GPT-2).
The code is available at \faGithub~\href{https://github.com/ryehr/RRC\_steganography}{github.com/ryehr/RRC\_steganography}.
\end{abstract}

\section{Introduction}

Linguistic steganography, as a promising field in safeguarding information, refers to the art of concealing messages within texts. 
With rapid advancements in large language models (LLM)~\cite{NEURIPS2020_1457c0d6,achiam2023gpt, Claude3}, LM-based steganography methods~\cite{ziegler-etal-2019-neural,10.1145/3664647.3680562, yan2025comprehensive} have dominated linguistic steganography, as leveraging LMs can create flexible text content, diverse genres, and consistent contexts, enabling high embedding capacity. 
Figure~\ref{fig: linguistic_steganography} illustrates how a sender (Alice) and a receiver (Bob) communicate using linguistic steganography.

\begin{figure}[!t]
 \centering
 \includegraphics[width=\columnwidth]{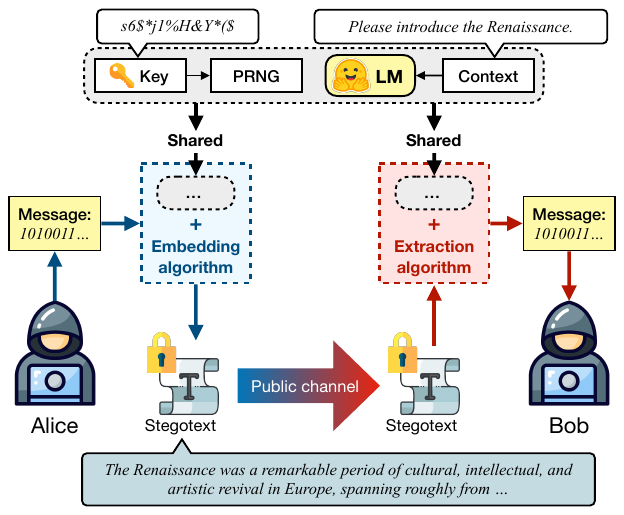} 
 \caption{A schematic diagram of linguistic steganography, where PRNG refers to a pseudo-random number generator for controlling randomness and reproducibility. Alice embeds the secret message into a steganographic text (stegotext), and Bob extracts the secret message from the received stegotext.}
 \label{fig: linguistic_steganography}
\end{figure}

Simmons’s ``Prisoners’ Problem''~\cite{Simmons1984} illustrates a steganographic scenario, where Alice and Bob (steganographers) are trying to hatch an escape plan, and they can only communicate under the scrutiny of a warden, Eve (a steganalyzer), who can block communication if any suspicious content is detected. To evade detection, they embed their secret message into an innocent-looking carrier (steganographic content).
Intuitively, steganographic content is expected to closely resemble normal content, leading to the concept of steganographic security. 
% This notion was first formalized by~\citet{10.1007/3-540-49380-8_21} using the Kullback–Leibler (KL) divergence between the cover distribution $P_c$ and the steganographic distribution $P_s$.
However, incorporating steganographic algorithms into the language model’s prediction and sampling processes often introduces distributional distortions~\cite{10.1145/2482513.2482514, 8470163}.
To address this challenge, recent work has explored approaches aimed at achieving \textit{provable security}~\cite{10.1007/3-540-45708-9_6, 4663056} in steganography.
% \footnote{Related work is introduced in Appendix~\ref{sec: Related Work} in detail.}

Despite the progress of provably secure approaches, each method has its own limitations. ADG~\cite{zhang-etal-2021-provably} fails to strictly preserve the original probability distribution by grouping candidate tokens at each generative step. Meteor~\cite{10.1145/3460120.3484550}, which is based on arithmetic coding (AC)~\cite{ziegler-etal-2019-neural}, inevitably distorts the original distribution when encoding intervals. 
Although iMEC~\cite{witt2023perfectly}, Discop~\cite{10179287}, and SparSamp~\cite{wang2025sparsampefficientprovablysecure} maintain the original probability distribution, the first two suffer from limited embedding capacity and slow embedding speeds. 
SparSamp, the current state-of-the-art method, still falls short of achieving full embedding efficiency (i.e., 100\% entropy utilization).
All of these methods process the secret message as discrete bits, which can result in losses in embedding efficiency or imperceptibility when incorporating some mechanisms for unique extraction.

These limitations motivate us to revisit a classical entropy-coding technique, range coding~\citep[RC;][]{martin1979range} which encodes messages in decimal form rather than in bits, and explore whether it can achieve (i) preservation of the original probability distribution, (ii) full embedding efficiency, and (iii) high embedding speed.\footnote{When all operations are carried out in decimals, the original distribution at each generative step is rescaled to a new range, without distortion caused by constructing discrete bits.} 
% Coding done with digits instead of with bits makes RC well-suited for preserving the original probability distribution and achieving higher embedding speed.
% In this paper, instead of designing sophisticated tricks, we turn our attention to a classic entropy coding method, range coding~\citep[RC;][]{martin1979range}. 
% RC is very similar to AC in data compression, except that the coding is done with digits in any base, instead of bits.
% It is convenient to take advantage of this feature to preserve the original probabilities and a higher embedding speed, because if all processes are conducted in decimals without any binary encoding, the original probability distribution at each generative step are only rescaled into new ranges without ratio distortion.
% Besides, RC, as an entropy coding method just like AC, naturally enables RC-based steganography to take full advantage of entropy to achieve ideal embedding capacity.
% In this paper, rather than combining complex algorithms, we turn our attention to a classic entropy coding method: range coding~\citep[RC;][]{martin1979range}.
% RC is closely related to arithmetic coding (AC) in the context of data compression, but with a key difference: \textit{it performs encoding using digits in any base, rather than restricting to bits}.
% The key contributions of this work are as follows: 

% Furthermore, as an entropy coding method akin to arithmetic coding (AC), RC inherently allows RC-based steganography to fully utilize the entropy, thereby achieving ideal embedding capacity. 
% 

We begin by proposing a vanilla RC steganography (Section~\ref{sec: Vanilla Range-Coding Steganography}). To the best of our knowledge,  it is the first attempt to embed a secret message \textbf{entirely in decimal form}.
However, we discover its security issues: (1) distortion on probability distribution and (2) randomness reuse.

To address these issues, we present \textbf{rotation range-coding (RRC)} steganography, which incorporates a rotation mechanism (Section~\ref{sec: Rotation Range-Coding Steganography}). This mechanism ensures zero KL divergence at every generative step and prevents reuse of randomness. The method is \textbf{training-free} and \textbf{plug-and-play}.

In addition, we provide theoretical analysis and proofs regarding zero KL divergence and computational security, supporting the \textbf{provable security} of our RRC steganography (Section~\ref{sec: Analysis of RRC Steganography}).
Additionally, the approximate 100\% entropy utilization for embedding capacity is analyzed and empirically validated.
The key advantage of RC-based steganography is \textit{ensuring unique extraction internally, without requiring any trade-offs or external restrictions.}

Experimental results in various language models demonstrate that RRC steganography consistently achieves \textbf{the highest embedding efficiency} (i.e., entropy utilization) and \textbf{great embedding speed} (up to 1554.66 bits/s in GPT-2) compared to all provably secure baseline methods (Section~\ref{sec: Experiments}).
Experiments also show that RRC steganography has strong scalability and anti-steganalysis capacity.

\section{Background and Preliminaries}
\subsection{Language Model Basics}
A language model (LM) has a vocabulary $\mathcal{V}$, a set of tokens.
% The size of typical vocabularies ($|\mathcal{V}|$) is greater than $50{,}000$ tokens~\cite{radford2019language}.
Consider a sequence of LM-generated $T$ tokens $\{s^{(t)}\} \in \mathcal{V}^T$. 
Tokens with negative indices, $[s^{(-N_p)},\dots,s^{(-1)}]$, represent a \textit{prompt} of length $N_p$ and $[s^{(0)},\dots,s^{(T-1)}]$ are tokens generated by an LM in response to the prompt.

The next token prediction by an LM at position $t$, is a function whose input is a sequence of known tokens $[s^{(-N_p)},\dots,s^{(t-1)}]$ which consists of a prompt and the first $t-1$ LM-generated tokens. Then it outputs a logit vector, corresponding to each token 
% $t_i$ (where $i=1,\dots,|\mathcal{V}|$) 
in $\mathcal{V}$.
These logits are then converted into a discrete probability distribution $\boldsymbol{p}^{(t)} = (p^{(t)}_1,\dots,p^{(t)}_{|\mathcal{V}|})$ over the vocabulary, via a $\mathrm{softmax}$ operator (commonly).
The next token is then sampled from $\boldsymbol{p}^{(t)}$ using either standard multinomial sampling, beam search, or other strategies. 

\subsection{LM-based Steganography}
Alice (the sender) wants to communicate a secret message $m_s \sim$ $U(\{0,1\}^l)$ with Bob (the receiver) by embedding it in a natural-language text $t_s$ (a stegotext).
% The uniform distribution is chosen for $m_s$ without loss of generality: if $m_s$ has additional structure it can be further compressed to a uniformly distributed random variable~\cite{10.1109/TIT.2004.840860}. 
Alice and Bob have agreed on an embedding function $\mathcal{S}_{\text{emb}}$ and an extracting function $\mathcal{S}_{\text{ext}}$ that perform steganography, achieved by a language model, $\mathcal{M}$. These two functions are supposed to be invertible. In other words, $\mathcal{S}_{\text{emb}}(\mathcal{M},m_s) = t_s$, $\mathcal{S}_{\text{ext}}(\mathcal{M},t_s)=m_s^{\prime}$.
% \footnote{In this work, we do not consider disambiguation methods~\cite{nozaki-murawaki-2022-addressing,yan2023A,qi2024provably} that focus on maintaining $m_s = m_s^{\prime}$, since this work is orthogonal to disambiguation.}

\subsection{Security of Steganography}
\label{sec: Security of Steganography}

\citet{10.1007/3-540-49380-8_21} first modeled steganographic security from the perspective of information theory, where given an object $\mathbf{x}$, the security of a stegosystem can be quantified by Kullback-Leibler (KL) divergence between the cover distribution (the channel distribution) \(P_c\) and the stego distribution \(P_s\),
\begin{equation}
    D_{\text{KL}}(P_c||P_s) = \sum_{\mathbf{x} \in \mathcal{C}}P_c(\mathbf{x})\log\frac{P_c(\mathbf{x})}{P_s(\mathbf{x})}
\end{equation}
which typically measures how different the two distributions are. When $D_{\text{KL}} (P_c||P_s) = 0$, the stegosystem is considered to be \textit{perfectly secure} in this perspective of information theory.

Benefiting from the explicit generative models that can predict probability distributions, the above definition of steganographic security can be modeled into another goal, that is, steganography is indistinguishable from the normal generation process, i.e., random sampling~\cite{10179287}.

In addition, from the perspective of computational security~\cite{10.1007/3-540-45708-9_6, katzenbeisser2002defining}, steganography is secure against chosen hiddentext attacks, if for all probabilistic polynomial time (PPT) adversary detection $\mathcal{A}_{\mathcal{D}}$, it holds:
\begin{equation}\label{eq: computational security}
    |\text{Pr}[\mathcal{A}_{\mathcal{D}}(x_s) = 1] - \text{Pr}[\mathcal{A}_{\mathcal{D}}(x_c) = 1]| < \text{negl}(\lambda)
\end{equation}
where $x_s$ is the stegotext, $x_c$ is the normally generated cover text, $\lambda$ is the security parameter of the shared key $K$ (usually the length of $K$), and $\text{negl}(\lambda)$ is a negligible function concerning $\lambda$.

\subsection{Statistical Imperceptibility of LM-based Steganography}
\label{sec: imperceptibility_GLS}

Following the previous formulation~\cite{dai-cai-2019-towards, shen-etal-2020-near}, statistical imperceptibility refers to the similarity between the true language model $\mathcal{M}^t$ in the monitored channel and $\mathcal{M}^s$, the language model $\mathcal{M}$ integrated with steganographic algorithms. Specifically, the total variation distance (TVD) is used to measure statistical imperceptibility. Consider the TVD between $\mathcal{M}^t$ and $\mathcal{M}^s$, i.e. $d(\mathcal{M}^t, \mathcal{M}^s)$, by triangle inequality:
\begin{equation}\label{eq: triangle}
    d(\mathcal{M}^t, \mathcal{M}^s) \leq d(\mathcal{M}^t, \mathcal{M}) + d(\mathcal{M}, \mathcal{M}^s).
\end{equation}
As $d(\mathcal{M}^t, \mathcal{M})$ is a criterion to measure the original language model, which is limited by the research on language models. Thus, $d(\mathcal{M}, \mathcal{M}^s)$ is the main focus of linguistic steganography.

According to Pinsker’s inequality~\cite{1201071} and additivity of KL divergence, $d(\mathcal{M}, \mathcal{M}^s)$ can be further decomposed in each step, that is:
\begin{equation}\label{eq: KLD}
    d(\mathcal{M}, \mathcal{M}^s) \leq \sqrt{\frac{\ln{2}}{2}\sum_{t=1}^{\infty}D_{\text{KL}}(\boldsymbol{p}^{(t)}||\boldsymbol{\hat{p}}^{(t)})}
\end{equation}
where $\boldsymbol{p}^{(t)}$ is the original probability distribution at $t^{th}$ step, and $\boldsymbol{\hat{p}}^{(t)}$ is transformed from $\boldsymbol{p}^{(t)}$ via sampling and encoding.
Hence,  linguistic steganography could aim to minimize $D_{\text{KL}}(\boldsymbol{p}^{(t)}||\boldsymbol{\hat{p}}^{(t)})$, in order to obtain relative near-imperceptibility. 
Some derivation is skipped here, as details are verified in~\cite{dai-cai-2019-towards, shen-etal-2020-near, 1201071}.

\begin{figure}[!t]
 \centering
 \includegraphics[width=\columnwidth]{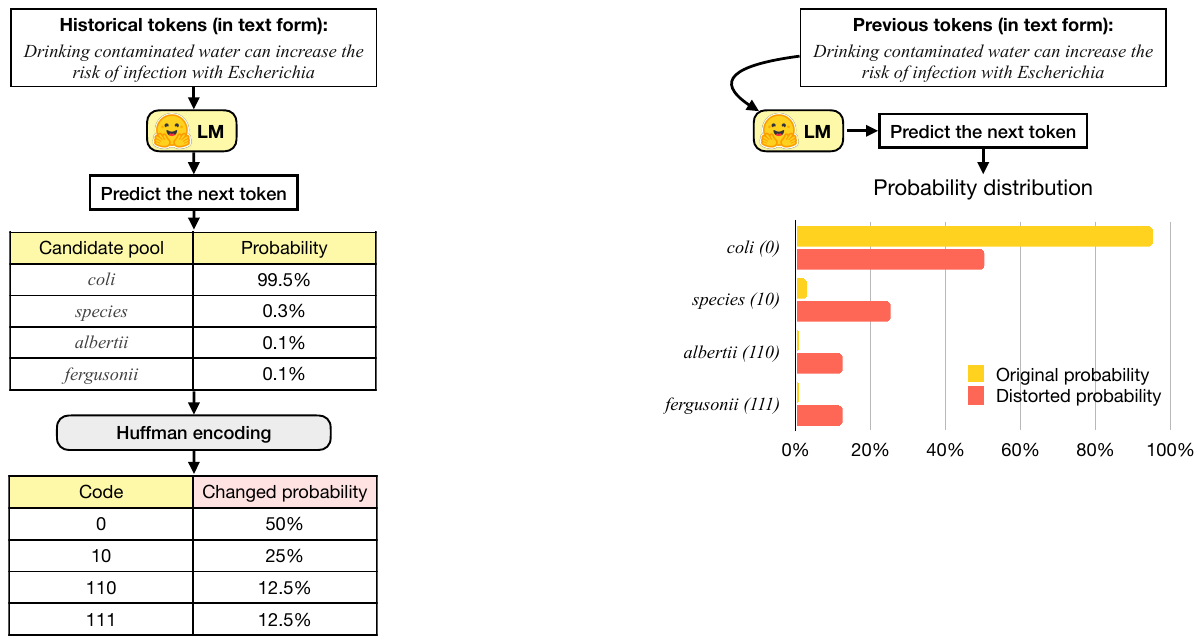} 
 \caption{An example of how steganographic encoding (using Huffman coding~\cite{8470163}) can alter the original top-4 probability distribution predicted by a language model. Specifically, a pair such as \textit{coli (0)} denotes a candidate token along with its  Huffman code, and the distorted probabilities are computed based on a random secret message.}
 \label{fig: distribution_distortion}
\end{figure}

Specifically, Figure~\ref{fig: distribution_distortion} illustrates how distribution distortion can compromise imperceptibility. In this low-entropy example, ``\textit{coli}'' has an overwhelmingly high probability of being the next token, making any alternative token extremely unlikely. If steganography introduces distribution distortion (such as through Huffman encoding), it may create detectable patterns that could be exploited by a steganalysis detector.

% In summary, $D_{\text{KL}}(\boldsymbol{p}^{(t)}||\boldsymbol{\hat{p}}^{(t)}) = 0$ (for each $t$) is a sufficient condition for achieving \textit{near-imperceptible} steganography. Besides, $D_{\text{KL}}(\boldsymbol{p}^{(t)}||\boldsymbol{\hat{p}}^{(t)}) = 0$ (for each $t$) also implies indistinguishability from random sampling, thereby satisfying the requirement for \textit{perfect security}.

\subsection{Related Work with Zero KL Divergence}
Achieving $D_{\text{KL}}(\boldsymbol{p}^{(t)}||\boldsymbol{\hat{p}}^{(t)}) = 0$ (for each $t$) is a desirable objective in steganography.
\citet{10179287} proposed Discop, a representative zero-KL provably secure steganographic method. At each generative step, the message determines which \textit{distribution copy} to sample.
However, Discop must avoid overlaps between its rotated copies to keep extraction unique: whenever overlaps occur, it embeds fewer bits.
% It further employs an iterative Huffman-tree-based method to improve capacity, achieving high entropy utilization, though the $O(|\mathcal{V}|)$ complexity of constructing a Huffman tree at each step limits efficiency when encoding large vocabularies. 
Recently, \citet{wang2025sparsampefficientprovablysecure} introduced SparSamp, which embeds messages by combining them with pseudo-random numbers to obtain message-derived random numbers for sampling.
To achieve uniquely extractable, SparSamp must sparsify its sampling grid, which leaves a small capacity gap.
More related methods and details are shown in Appendix~\ref{sec: Related Work}. 
% These methods, which process the secret message as discrete bit strings, inevitably incur a trade-off, leading to potential losses in \textit{either entropy utilization or imperceptibility}.

% \subsection{Steganography System}
% Steganography is usually illustrated by Simmons’ ``Prisoners’ Problem''~\cite{Simmons1984}: Alice and Bob (steganographers) are in jail, trying to hatch an escape plan. The only way they can communicate with each other is carefully censored by the warden Eve (steganalyzer). Once Eve detects any ``unusual'' such as illegal words, encrypted messages, or abnormal codes, she will block their plan and throw them into high-security solitary confinement. Therefore, they must find some way to embed the secret message into an ``innocent-looking'' cover-object to obtain a stego-object.

\section{Vanilla Range-Coding Steganography}
\label{sec: Vanilla Range-Coding Steganography}
In this section, we tentatively start by describing a simple \textit{vanilla} version of range-coding (RC) steganography, which directly applies RC to linguistic steganography. 

\subsection{Embedding \& Extraction}
Figure~\ref{fig: Vanilla_RC_Stega} briefly illustrates how vanilla RC steganography embeds a message into a text. In range coding, all the information can be represented in decimals and ranges (intervals). In this example, the 16-bit message is interpreted as an integer whose decimal representation is 20219, and the interval is initialized as $[0,2^{16}) = [0, 65536)$.

Algorithm~\ref{algorithm: vanilla RC embed} outlines how the sender (Alice) embeds the secret message $m_s$ into the text $t_s$ using vanilla RC steganography. 
Specifically, $m_s$ is first decimalized to $d_s$ in Line 1, and all subsequent procedures operate directly on $d_s$ rather than on a bitstream.
% Inspired by Discop~\cite{10179287}, we employ a pseudo-random number generator (PRNG) and a symmetric key $K$ to generate pseudo-random numbers for the following sampling (Line 2), ensuring reproducibility and correct extraction.
In Line 2, the initial interval is set to $[0,2^l)$, where $l$ is the length of $m_s$.
During subsequent iterative processes (Lines 3--9), the interval is progressively narrowed at each step. The iteration ends when the midpoint of the interval is exactly rounded to $d_s$ (which ensures \textbf{uniqueness}).

Algorithm~\ref{algorithm: vanilla RC extract} outlines how the receiver (Bob) extracts the secret message $m_s$ from the received text $t_s$. The initial interval is narrowed according to each token received, and $d_s$ is the rounded value of the midpoint of the final interval. Finally, the extraction result $m_s$ is binarized from $d_s$.

\begin{figure}[!t]
 \centering
 \includegraphics[width=\columnwidth]{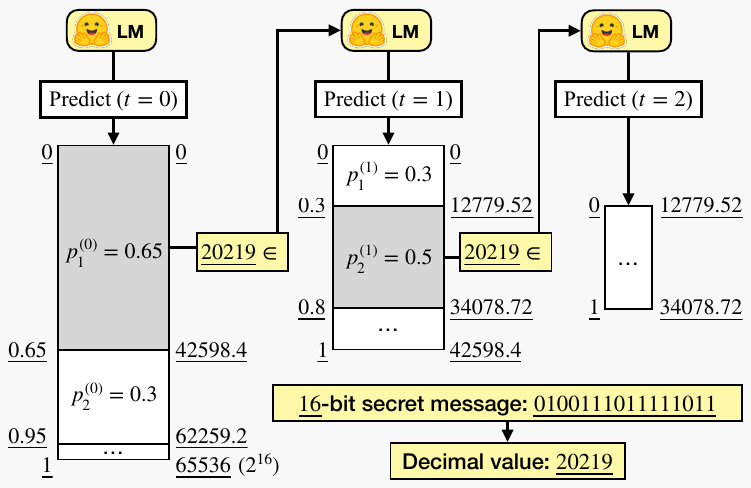} 
 \caption{An example of procedures for embedding a 16-bit secret message into a text via the vanilla RC steganography. The interval is iteratively narrowed until it can \textbf{uniquely} represent the decimal value 20219.}
 \label{fig: Vanilla_RC_Stega}
\end{figure}

\begin{algorithm}[!t]
\small
\caption{Vanilla RC steganography (embed)}\label{algorithm: vanilla RC embed} 
\textbf{Input:}\\
Context (initial previous tokens), $C$ \\
Language model, $\mathcal{M}$ \\
Message length, $l$ \\
Secret message, $m_s$  \\

{\textbf{Output:}}\\
Steganographic text, $t_s$\\ 

\begin{algorithmic}[1]
% \STATE Initialize previous tokens $$ using $s^{(-N_p)},\dots,s^{(-1)}$.
\STATE $d_s \leftarrow \mathrm{bin2dec}(m_s)$; \hfill  $\mathtt{// \ Decimalize}$
\STATE $[L,R) \leftarrow [0,2^{l})$; \hfill  $\mathtt{// \ Initialize \ interval}$
\WHILE{$\mathrm{round}(\frac{L + R}{2}) \neq d_s$}
    \STATE $\boldsymbol{p}^{(t)} \leftarrow \mathcal{M}(C)$; \hfill  $\mathtt{// \ Predict \ probs}$
    \STATE $\boldsymbol{c}^{(t)} \leftarrow 0||\boldsymbol{p}^{(t)}.\mathrm{cumsum()}$; \hfill  $\mathtt{// \ Cumulate \ probs}$
    \STATE $\boldsymbol{c}^{\prime(t)} \leftarrow L+(R-L) \times \boldsymbol{c}^{(t)}$; \hfill  $\mathtt{// \ Rescale}$
    \STATE Select $\mathrm{token}_i$ so that $d_s \in [\boldsymbol{c}^{\prime(t)}[i-1],\boldsymbol{c}^{\prime(t)}[i])$;
    \STATE $[L,R) \leftarrow [\boldsymbol{c}^{\prime(t)}[i-1],\boldsymbol{c}^{\prime(t)}[i])$;
    \STATE $C \leftarrow C||\mathrm{token}_i$;
\ENDWHILE

\STATE Detokenize $C$ into $t_s$;
\RETURN $t_s$

\end{algorithmic}

\end{algorithm}

\begin{algorithm}[!t]
\small
\caption{Vanilla RC steganography (extract)}\label{algorithm: vanilla RC extract} 
\textbf{Input:}\\
Context (initial previous tokens), $C$ \\
Language model, $\mathcal{M}$ \\
Message length, $l$ \\
Steganographic text, $t_s$\\ 

{\textbf{Output:}}\\
Secret message, $m_s$  \\

\begin{algorithmic}[1]
% \STATE Initialize previous tokens $$ using $s^{(-N_p)},\dots,s^{(-1)}$.
\STATE Tokenize $t_s$ into $S$;
\STATE $[L,R) \leftarrow [0,2^{l})$; \hfill  $\mathtt{// \ Initialize \ interval}$
% \STATE $t \leftarrow 0$;
\FOR{$t = 0, 1, \ldots, |S| - |C| - 1$}
    \STATE $\boldsymbol{p}^{(t)} \leftarrow \mathcal{M}\left(S\left[:|C|+t\right]\right)$; \hfill  $\mathtt{// \ Predict \ probs}$
    \STATE $\boldsymbol{c}^{(t)} \leftarrow 0|| \boldsymbol{p}^{(t)}.\mathrm{cumsum()}$; \hfill  $\mathtt{// \ Cumulate \ probs}$
    \STATE $\boldsymbol{c}^{\prime(t)} \leftarrow L+(R-L) \times \boldsymbol{c}^{(t)}$; \hfill  $\mathtt{// \ Rescale}$
    \STATE Select $\mathrm{token}_i$ so that $\mathrm{token}_i = S\left[|C|+t + 1\right]$;
    \STATE $[L,R) \leftarrow [\boldsymbol{c}^{\prime(t)}[i-1],\boldsymbol{c}^{\prime(t)}[i])$;
    % \STATE $t \leftarrow t + 1$;
\ENDFOR

\STATE $d_s \leftarrow \mathrm{round(\frac{L+R}{2})}$;
\STATE $m_s \leftarrow \mathrm{dec2bin}(d_s).\mathrm{zfill}(l)$; \hfill  $\mathtt{// \  Binarize \ \& \ Fill \ 0}$
\RETURN $m_s$

\end{algorithmic}

\end{algorithm}

\subsection{Security Issues}
\label{sec: Security Issues}
For vanilla RC steganography, there can be two security issues:

\textit{1) Distortion on probability distribution}. Taking the first generative step in Figure~\ref{fig: Vanilla_RC_Stega} as an example, the (softmax) probability of $\mathrm{token}_1 \ (t=0)$, $p_1^{(0)}$, is $0.65$, and its interval is $[0,42598.4)$. Considering a random $16$-bit secret message $m_s \sim U(\{0,1\}^{16})$, so $d_s \sim U(\{0,1,\ldots,2^{16}-1\})$ ($d_s$ is a \textbf{discrete}  uniform random variable). Then, the steganographic sampled probability for $\mathrm{token}_1 \ (t=0)$ is
$P\left(d_s \in [0,42598.4)\right) = \frac{42599}{65536} \neq 0.65 = p_1^{(0)}$.
Thus, distortion on probability distribution occurs and zero KL divergence or perfect security cannot hold. 
Even though it could be mitigated when lengthening $m_s$ ($l$ can be set greater than the tensor precision). However, similar to AC-based steganography, iterative interval narrowing still causes noticeable distortion in small intervals.

\textit{2) Randomness reuse}. According to~\citet{10.1145/3460120.3484550}, reusing randomness in multiple sampling events could expose features and bias to detectors.
Therefore, only using the bits of the message as the randomness or encrypting the message with a pseudo-random cipher, as in a public-key solution, is insecure because multiple samplings will be forced to reuse randomness. 

\begin{algorithm}[!t]
\small
\caption{Rotation RC steganography (embed)}\label{algorithm: Rotation RC embed} 
\textbf{Input:}\\
Context (initial previous tokens), $C$ \\
Pseudo-random number generator, $\mathrm{PRNG}$ \\
Language model, $\mathcal{M}$ \\
Symmetric key (seed), $K$ \\
Message length, $l$ \\
Secret message, $m_s$  \\

{\textbf{Output:}}\\
Steganographic text, $t_s$\\ 

\begin{algorithmic}[1]
% \STATE Initialize previous tokens $$ using $s^{(-N_p)},\dots,s^{(-1)}$.
\STATE $d_s^{(-1)} \leftarrow \mathrm{bin2dec}(m_s)$; \hfill  $\mathtt{// \ Decimalize}$
\STATE $\mathrm{PRNG.set\_seed}(K)$;
\STATE $[L^{(-1)},R^{(-1)}) \leftarrow [0,2^{l})$; \hfill  $\mathtt{// \ Initialize \ interval}$
\FOR{$t = 0,1,\ldots$}
    \STATE $\boldsymbol{p}^{(t)} \leftarrow \mathcal{M}(C)$; \hfill  $\mathtt{// \ Predict \ probs}$
    \STATE $\boldsymbol{c}^{(t)} \leftarrow 0 || \boldsymbol{p}^{(t)}.\mathrm{cumsum()}$; \hfill  $\mathtt{// \ Cumulate \ probs}$
    \STATE $\Delta^{(t-1)} = R^{(t-1)} - L^{(t-1)}$;
    \STATE $\boldsymbol{c}^{\prime(t)} \leftarrow L^{(t-1)}+ \Delta^{(t-1)} \times \boldsymbol{c}^{(t)}$; \hfill  $\mathtt{// \ Rescale}$
    \STATE $o^{(t)} \leftarrow U(0,1).\mathrm{sample}(\mathrm{PRNG^{(t)}})$;

    \STATE $d_s^{(t)} \leftarrow L^{(t-1)}+ (d_s^{(t-1)} - L^{(t-1)} + o^{(t)} \times \Delta^{(t-1)}) \bmod \Delta^{(t-1)} $; \hfill  $\mathtt{// \ Rotate}$
    \STATE Select $\mathrm{token}_i$ so that $d_s^{(t)} \in [\boldsymbol{c}^{\prime(t)}[i-1],\boldsymbol{c}^{\prime(t)}[i])$;
    \STATE $[L^{(t)},R^{(t)}) \leftarrow [\boldsymbol{c}^{\prime(t)}[i-1],\boldsymbol{c}^{\prime(t)}[i])$;
    \STATE $C \leftarrow C||\mathrm{token}_i$;
    \IF{$\frac{L^{(t)} +R^{(t)}}{2} - d_s^{(t)} \in (-0.5,0.5] $}
        \STATE \textbf{break}
    \ENDIF
\ENDFOR

\STATE Detokenize $C$ into $t_s$;
\RETURN $t_s$

\end{algorithmic}

\end{algorithm}

\begin{algorithm}[!t]
\small
\caption{Rotation RC steganography (extract)}\label{algorithm: Rotation RC extract} 
\textbf{Input:}\\
Context (initial previous tokens), $C$ \\
Pseudo-random number generator, $\mathrm{PRNG}$ \\
Language model, $\mathcal{M}$ \\
Symmetric key (seed), $K$ \\
Message length, $l$ \\
Steganographic text, $t_s$\\ 

{\textbf{Output:}}\\
Secret message, $m_s$  \\

\begin{algorithmic}[1]
% \STATE Initialize previous tokens $$ using $s^{(-N_p)},\dots,s^{(-1)}$.
\STATE Tokenize $t_s$ into $S$;
\STATE $\mathrm{PRNG.set\_seed}(K)$;
\STATE $[L^{(-1)},R^{(-1)}) \leftarrow [0,2^{l})$; \hfill  $\mathtt{// \ Initialize \ interval}$
\STATE $t_{\text{end}} \leftarrow |S| - |C| - 1$;
\FOR{$t = 0, 1, \ldots, t_{\text{end}}$}
    \STATE $\boldsymbol{p}^{(t)} \leftarrow \mathcal{M}\left(S\left[:|C|+t\right]\right)$; \hfill  $\mathtt{// \ Predict \ probs}$
    \STATE $\boldsymbol{c}^{(t)} \leftarrow 0 || \boldsymbol{p}^{(t)}.\mathrm{cumsum()}$; \hfill  $\mathtt{// \ Cumulate \ probs}$
    \STATE $\Delta^{(t-1)} \leftarrow R^{(t-1)} - L^{(t-1)}$;
    \STATE $\boldsymbol{c}^{\prime(t)} \leftarrow L^{(t-1)}+ \Delta^{(t-1)} \times \boldsymbol{c}^{(t)}$; \hfill  $\mathtt{// \ Rescale}$
    \STATE Select $\mathrm{token}_i$ so that $\mathrm{token}_i = S\left[|C|+t + 1\right]$;
    \STATE $[L^{(t)},R^{(t)}) \leftarrow [\boldsymbol{c}^{\prime(t)}[i-1],\boldsymbol{c}^{\prime(t)}[i])$;
    
\ENDFOR

\STATE $\mathrm{mid}^{(t_{\text{end}})} \leftarrow (L^{(t_{\text{end}})} + R^{(t_{\text{end}})}) / 2$; 
\FOR{$t =t_{\text{end}}, \ldots, 1,0$}
    \STATE $o^{(t)} \leftarrow U(0,1).\mathrm{sample}(\mathrm{PRNG^{(t)}})$;
    \STATE $\mathrm{mid}^{(t-1)} \leftarrow L^{(t-1)} + (\mathrm{mid}^{(t)} - L^{(t-1)} - o^{(t)} \times \Delta^{(t-1)}) \bmod \Delta^{(t-1)}$; \hfill  $\mathtt{// \ Rotate \ reversely}$
\ENDFOR
\STATE $d_s^{(-1)} \leftarrow \mathrm{round\_half\_down}(\mathrm{mid}^{(-1)})$;

\STATE $m_s \leftarrow \mathrm{dec2bin}(d_s^{(-1)}).\mathrm{zfill}(l)$; \hfill  $\mathtt{// Binarize \ \& \ Fill \ 0}$
\RETURN $m_s$

\end{algorithmic}

\end{algorithm}

\section{Rotation Range-Coding Steganography}
\label{sec: Rotation Range-Coding Steganography}
Considering the security issues discussed above, we propose a rotation range-coding (RRC) steganographic method.
Instead of directly using the constant $d_s$, our proposed rotation mechanism updates $d_s^{(t-1)}$ to $d_s^{(t)}$ at each time step $t$ (initial $d_s = d_s^{(-1)}$), with the following objectives:
\begin{itemize}
    \item To transform the discrete uniform random variable $d_s$ to a \textbf{continuous} uniform random variable $d_s^{(t)}$ at each $t$, thereby preserving the original probability distribution and ensuring zero KL divergence.
    \item To introduce \textit{fresh} randomness at each $t$, thereby preventing the reuse of randomness. 
\end{itemize}

% In this section, we introduce the procedures of our RRC steganography, and explain how it overcomes security issues in vanilla RC steganography.
% Additionally, we explain embedding efficiency of our method, that is, how its embedding capacity can achieve 100\% entropy utilization.

\subsection{Embedding of RRC Steganography}

Algorithm~\ref{algorithm: Rotation RC embed} outlines the embedding procedures of RRC steganography (Algorithm~\ref{algorithm: Rotation RC embed} shares several steps with Algorithm~\ref{algorithm: vanilla RC embed}).
% Specifically, $m_s$ is first decimalized to $d_s$ in Line 1, and all subsequent procedures operate directly on $d_s$ rather than on a bitstream.
Inspired by other provably secure methods, we employ a pseudo-random number generator (PRNG) and a symmetric key $K$ to generate pseudo-random numbers for the following sampling (Line 2), ensuring reproducibility and correct extraction.
% In Line 3, the initial interval is set to $[0,2^l)$, where $l$ is the length of $m_s$.
% During the subsequent iterative processes (Lines 5--13), the interval is progressively narrowed at each step. The terminal condition is met when the average of upper and lower boundaries of the interval rounds exactly to $d_s$.\footnote{$\mathrm{round(.)}$ refers to $\mathrm{round\_half\_down}(.)$ in this paper, and for any $m \in \mathbb{Z}$ $\mathrm{round\_half\_down}( m + 0.5) = m$.} 
Note that the pseudo-random numbers generated by PRNG are used to control the sampling of the offset $o \sim U(0,1)$ at each $t$ (Line 9), and then $o$ is used to rotate $d_s^{(t-1)}$ to $d_s^{(t)}$ (Line 10). Besides, the termination condition in RRC steganography is $\frac{L^{(t)} +R^{(t)}}{2} - d_s^{(t)} \in (-0.5,0.5] $ (Lines 14--15), which is tailored for \textit{\textbf{unique extraction} and avoiding generating unnecessary tokens}.

\subsection{Extraction of RRC Steganography}

Algorithm~\ref{algorithm: Rotation RC extract} outlines how the receiver (Bob) extracts the secret message $m_s$ from the steganographic text $t_s$. As Alice and Bob have agreed on $\mathrm{PRNG}$ and symmetric key $K$, Bob can synchronize each $t$-time rotation with Alice and reproduce each range of $d_s^{(t)}$ according to each interval $[L^{(t)}, R^{(t)})$ at each $t$. Based on the termination condition of the embedding algorithm, the end time $t_{\text{end}} = |S| - |C| - 1$, and $\mathrm{mid}^{(t_{\text{end}})} = (L^{(t_{\text{end}})} + R^{(t_{\text{end}})}) / 2$, there is:
\[\mathrm{mid}^{(t_{\text{end}})} - d_s^{(t_{\text{end}})} \in (-0.5,0.5] \]
\[d_s^{(t_{\text{end}})} \in [\mathrm{mid}^{(t_{\text{end}})} - 0.5, \mathrm{mid}^{(t_{\text{end}})} + 0.5).\]

Considering linear transformation and the rotation in embedding, for each $t$ there is (in Line 15): 
% $\mathrm{mid}^{(t-1)} =  L^{(t-1)} +  (\mathrm{mid}^{(t)} - L^{(t-1)} - o^{(t)} \times  \Delta^{(t-1)})  \bmod \Delta^{(t-1)} $
\begin{align*}
& \mathrm{mid}^{(t-1)} =  L^{(t-1)} +  (\mathrm{mid}^{(t)} - L^{(t-1)} - o^{(t)} \times  \\ &  \Delta^{(t-1)})  \bmod \Delta^{(t-1)} 
\end{align*}
\[d_s^{(t)} \in [\mathrm{mid}^{(t)} - 0.5, \mathrm{mid}^{(t)} + 0.5)\]
where $\Delta^{(t-1)} = R^{(t-1)} - L^{(t-1)}$.
After iteration, as $d_s^{(-1)} \in \mathbb{Z}$, there is (Line 16):
\[d_s^{(-1)} \in [\mathrm{mid}^{(-1)} - 0.5, \mathrm{mid}^{(-1)} + 0.5) \]
\[d_s^{(-1)} = \mathrm{round\_half\_down}(\mathrm{mid}^{(-1)})\]
where $\mathrm{round\_half\_down}$ means that when a number is exactly halfway between two possible rounded values (e.g., $2.5$), $\mathrm{round\_half\_down}$ rounds toward the smaller rounded values (e.g., $2$).

Therefore, in RRC steganography, $d_s^{(-1)}$ can be computed \textbf{uniquely} by Bob, and then $m_s$ is binarized from $d_s^{(-1)}$ (Line 17).

\section{Analysis of RRC Steganography}
\label{sec: Analysis of RRC Steganography}
% In this section, we prove the zero KL divergence and computational security of RRC steganography and analyze its capacity and complexity.

\subsection{Proof of Zero KL Divergence}
\label{sec: Proof of Zero KL Divergence}
Considering rotation, we first introduce Proposition~\ref{proposition: continuous uniform} (the rigorous proof is shown in Appendix~\ref{sec: Proof of Proposition 1}), and then explain how the original probability distribution is preserved (Proposition~\ref{proposition: 0 KLD}).
\begin{proposition}\label{proposition: continuous uniform}
$d_s^{(t)} \sim U(L^{(t-1)}, R^{(t-1)})$.
\end{proposition}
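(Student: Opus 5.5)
The plan is to prove the statement conditionally on the history up to step $t-1$, treating the offset $o^{(t)}$ as an ideal $U(0,1)$ variable independent of $d_s^{(t-1)}$ and of everything generated before step $t$. Here $U(0,1)$ is the idealization of the PRNG output. With $L^{(t-1)}, R^{(t-1)}$ and $d_s^{(t-1)}$ fixed by the conditioning, the key observation is that the rotation in Line 10 is translation by a uniform amount on a circle of circumference $\Delta^{(t-1)}$.

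First, I will define $x = d_s^{(t-1)} - L^{(t-1)}$. I will check that $x \in [0,\Delta^{(t-1)})$, which needs a separate argument at $t=0$ and for $t \ge 1$.
\begin{itemize}
\item For $t=0$, it holds because $d_s^{(-1)} \in \{0,\dots,2^l-1\} \subset [0,2^l)$.
\item For $t\ge 1$, it holds because Line 11 selects the token so that $d_s^{(t-1)} \in [L^{(t-1)}, R^{(t-1)})$.
\end{itemize}
In fact, for the uniformity argument only the value of $x$ modulo $\Delta^{(t-1)}$ matters, so this check is not even essential. Next, set $Y = o^{(t)}\Delta^{(t-1)}$, which is $U(0,\Delta^{(t-1)})$ by scaling. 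Then
\[d_s^{(t)} = L^{(t-1)} + (x + Y) \bmod \Delta^{(t-1)}.\]

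The core step is to show that $(x+Y)\bmod \Delta$ is $U[0,\Delta)$ for any fixed $x$. I will compute the CDF directly. For $z\in[0,\Delta)$, write $x' = x \bmod \Delta$. The event $\{(x'+Y)\bmod\Delta \le z\}$ is the union of two events:
\begin{itemize}
\item $Y \in [\Delta - x', \Delta - x' + z]$, when wrapping occurs;
\item $Y \in [0, z - x']$, when $z \ge x'$.
\end{itemize}
Splitting into the cases $z<x'$ and $z\ge x'$, the total length of these sets is $z$ in both cases, so the probability is $z/\Delta$. Equivalently, Lebesgue measure on the circle $\mathbb{R}/\Delta\mathbb{Z}$ is translation invariant. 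Translating by $L^{(t-1)}$ then gives $d_s^{(t)}\sim U(L^{(t-1)},R^{(t-1)})$ conditionally on the past. Since the conditional law does not depend on $d_s^{(t-1)}$ or on the earlier history, the statement also holds with the interval endpoints regarded as given.

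The main obstacle is the modelling issue of independence rather than the computation. The argument needs $o^{(t)}$ to be fresh and independent of $d_s^{(t-1)}$, which in turn depends on $m_s$ and on the earlier offsets. With a true random source this follows from Line 9 drawing a new sample at each step. With a PRNG it holds only up to computational indistinguishability, which I would defer to the computational-security argument. A second, minor point is that the statement must be read as conditional on the endpoints $L^{(t-1)},R^{(t-1)}$, which are themselves random. I will make this conditioning explicit so that the result can feed into Proposition~\ref{proposition: 0 KLD}: there, the token $i$ is chosen with probability $(\boldsymbol{c}^{\prime(t)}[i]-\boldsymbol{c}^{\prime(t)}[i-1])/\Delta^{(t-1)} = p_i^{(t)}$.
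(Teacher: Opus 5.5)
Your proof is correct and takes the same route as the paper's: you reduce $d_s^{(t-1)}-L^{(t-1)}$ modulo $\Delta^{(t-1)}$, scale the offset to $U(0,\Delta^{(t-1)})$, split into wrap and no-wrap cases to obtain the CDF $z/\Delta^{(t-1)}$, and translate by $L^{(t-1)}$. Your version is slightly more careful than the paper's, because you make the conditioning on the history and the independence of $o^{(t)}$ explicit, and you actually sum the two cases (when $z\ge x'$, read the wrap set as intersected with $[0,\Delta)$, so it contributes length $x'$), whereas the paper only lists the two case probabilities and asserts the result.
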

\begin{proposition}\label{proposition: 0 KLD}
In Line 11 (Algorithm~\ref{algorithm: Rotation RC embed}), the selected probability of each $\mathrm{token}_i$ equals $p^{(t)}_i$.
\end{proposition}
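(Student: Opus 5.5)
The plan is to derive Proposition~\ref{proposition: 0 KLD} directly from Proposition~\ref{proposition: continuous uniform}, conditioning on the history up to step $t-1$. Fix $t$ and condition on all tokens generated before step $t$. Under this conditioning the context $C$, and hence $\boldsymbol{p}^{(t)}$, $\boldsymbol{c}^{(t)}$, $L^{(t-1)}$, $R^{(t-1)}$ and $\Delta^{(t-1)}$, are fixed. Proposition~\ref{proposition: continuous uniform} gives that $d_s^{(t)}$ is uniform on $[L^{(t-1)},R^{(t-1)})$ with density $1/\Delta^{(t-1)}$. I will use it in this conditional form, which is the form its proof provides: the fresh offset $o^{(t)}\sim U(0,1)$ is independent of the past, and adding $o^{(t)}\Delta^{(t-1)}$ modulo $\Delta^{(t-1)}$ makes the rotated point uniform regardless of $d_s^{(t-1)}$.

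Next I compute the selection probability in Line 11. Token $i$ is selected exactly when $d_s^{(t)}\in[\boldsymbol{c}^{\prime(t)}[i-1],\boldsymbol{c}^{\prime(t)}[i])$. Since $\boldsymbol{c}^{\prime(t)}=L^{(t-1)}+\Delta^{(t-1)}\boldsymbol{c}^{(t)}$ and $\boldsymbol{c}^{(t)}$ is the cumulative sum of $\boldsymbol{p}^{(t)}$ prefixed by $0$, this subinterval lies inside $[L^{(t-1)},R^{(t-1)})$, because $\boldsymbol{c}^{(t)}$ runs from $0$ to $1$. Its length is $\Delta^{(t-1)}\bigl(\boldsymbol{c}^{(t)}[i]-\boldsymbol{c}^{(t)}[i-1]\bigr)=\Delta^{(t-1)}p_i^{(t)}$. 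Uniformity then gives
\[
\Pr[\mathrm{token}_i]=\frac{\Delta^{(t-1)}p_i^{(t)}}{\Delta^{(t-1)}}=p_i^{(t)}.
\]
The subintervals are disjoint, half-open and tile $[L^{(t-1)},R^{(t-1)})$, so exactly one token is chosen and the probabilities sum to one. Hence $\hat{\boldsymbol{p}}^{(t)}=\boldsymbol{p}^{(t)}$ and $D_{\text{KL}}(\boldsymbol{p}^{(t)}\|\hat{\boldsymbol{p}}^{(t)})=0$ at every step.

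The main obstacle is making sure Proposition~\ref{proposition: continuous uniform} holds \emph{conditionally} on the realized history. The past tokens carry information about $d_s^{(t-1)}$, so an unconditional statement would not suffice. I would handle this by noting that for any fixed value of $d_s^{(t-1)}$, the map $o\mapsto L^{(t-1)}+\bigl(d_s^{(t-1)}-L^{(t-1)}+o\Delta^{(t-1)}\bigr)\bmod\Delta^{(t-1)}$ pushes $U(0,1)$ forward to $U(L^{(t-1)},R^{(t-1)})$: it is a measure-preserving rotation of the circle of length $\Delta^{(t-1)}$. Averaging over $d_s^{(t-1)}$ given the history preserves this uniformity, because $o^{(t)}$ is independent of everything prior. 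This treats the PRNG output as truly uniform; the gap to pseudo-randomness belongs to the computational security argument. Zero-probability tokens, which give empty intervals, need no special treatment.
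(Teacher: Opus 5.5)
Your proof is correct and follows the paper's route. You invoke Proposition~\ref{proposition: continuous uniform} to get $d_s^{(t)}$ uniform on $[L^{(t-1)},R^{(t-1)})$, then compute the probability of $[\boldsymbol{c}^{\prime(t)}[i-1],\boldsymbol{c}^{\prime(t)}[i])$ as its length $\Delta^{(t-1)}p_i^{(t)}$ divided by $\Delta^{(t-1)}$. Your added argument that this uniformity holds conditionally on the realized history, because $o^{(t)}$ is independent of $d_s^{(t-1)}$ and the past tokens, is left implicit in the paper; it is what makes the per-step claim $\hat{\boldsymbol{p}}^{(t)}=\boldsymbol{p}^{(t)}$ meaningful.
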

\begin{proof}
    Considering the interval construction from $\boldsymbol{p}^{(t)}$ to $\boldsymbol{c}^{\prime(t)}$ (Lines 5--8 in Algorithm~\ref{algorithm: Rotation RC embed}), $\boldsymbol{p}^{(t)} = (p^{(t)}_1, \ldots,p^{(t)}_{|\mathcal{V}|})$, $\boldsymbol{c}^{(t)} = (c^{(t)}_o,c^{(t)}_1,\ldots,c^{(t)}_{|\mathcal{V}|}) = (0,\sum_{k=1}^1 p_k, \ldots,\sum_{k=1}^{|\mathcal{V}|}p_k)$, and $\boldsymbol{c}^{\prime(t)} = L^{(t-1)} + \Delta^{(t-1)} \times \boldsymbol{c}^{(t)} = (L^{(t-1)}, L^{(t-1)} + \Delta^{(t-1)} \times \sum_{k=1}^1 p_k, \ldots, R^{(t-1)})$.
    According to Proposition~\ref{proposition: continuous uniform}, $d_s^{(t)} \sim U(L^{(t-1)}, R^{(t-1)})$, so the probability density function is:
    \[
        f_{d_s^{(t)}}(x) =
        \begin{cases}
        \frac{1}{\Delta^{(t-1)}}, & \text{if } x\in [L^{(t-1)},R^{(t-1)}),\\
        0,  & \text{otherwise}
        \end{cases}
    \]
    \begin{align*}
       & P\left(d_s^{(t)} \in [\boldsymbol{c}^{\prime(t)}[i-1],\boldsymbol{c}^{\prime(t)}[i])\right) \\
       & = \int_{\boldsymbol{c}^{\prime(t)}[i-1]}^{\boldsymbol{c}^{\prime(t)}[i]} f_{d_s^{(t)}}(x)\,dx  = \frac{\boldsymbol{c}^{\prime(t)}[i] - \boldsymbol{c}^{\prime(t)}[i-1]}{\Delta^{(t-1)}} \\
       & = \frac{(\Delta^{(t-1)} \times \sum_{k=1}^i p_k) - ( \Delta^{(t-1)} \times \sum_{k=1}^{i-1} p_k)}{\Delta^{(t-1)}} \\
       & =  \sum_{k=1}^i p_k - \sum_{k=1}^{i-1} p_k = p_i.
    \end{align*}
    
    Therefore, the probability of selecting $\text{token}_i$ is exactly $p_i^{(t)}$, the original LM distribution.
\end{proof}
As RRC steganography does not change the original predicted probability by the LM for each token, the KL divergence between the original and the steganographic probability distribution is zero.
RRC steganography possesses perfect statistical imperceptibility (according to Section~\ref{sec: imperceptibility_GLS}).

\subsection{Computational Security}
From the perspective of computational security (Section~\ref{sec: Security of Steganography} and Equation~\ref{eq: computational security}), we prove our RRC steganography is a secure method, for which details are shown in Appendix~\ref{sec: Proof of Computational Security of RRC Steganography}.

\subsection{Embedding Capacity}
\label{sec: Embedding Capacity}
% Embedding Capacity of LM-based steganography is represented by how many secret bits can be averagely embedded into each token, that is, \textit{bits per token} ($\mathrm{BPT}$). Given the length of the secret message $l$ and the generated token number $T$, $\mathrm{BPT} = {l}/{T}$.

% According to algorithms of our RC-based steganography, let $[0,2^l)$ be the initial interval containing a target integer $d_s$. At each step $t$,  a probability distribution $\{p^{(t)}_i\}$ with entropy $H^{(t)} = -\sum_i p^{(t)}_i \log_2 p^{(t)}_i$ is randomly rotated and rescaled to partition the current interval. The process selects the subinterval $[L^{(t)}, R^{(t)})$ containing $d_s$ and repeats until $\mathrm{round}(\frac{L^{(t)} + R^{(t)}}{2}) = d_s$. Note that $\mathrm{round(.)}$ refers to $\mathrm{round\_half\_down}(.)$.
First, we consider when the embedding ends according to the proposed termination condition, there is a proposition (its proof is shown in Appendix~\ref{sec: Proof of Proposition 3}):
\begin{proposition}\label{proposition: termination}
$\Delta^{(t)} \leq 1$ is a sufficient condition for the embedding termination $\frac{L^{(t)} +R^{(t)}}{2} - d_s^{(t)} \in (-0.5,0.5] $ (Lines 14--15 in Algorithm~\ref{algorithm: Rotation RC embed}).
\end{proposition}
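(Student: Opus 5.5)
The plan is to use only the fact that, after Line 11 of Algorithm~\ref{algorithm: Rotation RC embed}, the rotated value lies in the selected subinterval, and then to compare its distance from the midpoint with the half-width $\Delta^{(t)}/2$.

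First I would record the invariant. By Line 11, $\mathrm{token}_i$ is chosen so that $d_s^{(t)} \in [\boldsymbol{c}^{\prime(t)}[i-1],\boldsymbol{c}^{\prime(t)}[i])$, and Line 12 sets $[L^{(t)},R^{(t)})$ to exactly this subinterval. Hence
\[
L^{(t)} \le d_s^{(t)} < R^{(t)}
\]
holds deterministically at every step. This does not need Proposition~\ref{proposition: continuous uniform}. It only needs the rotation in Line 10 to place $d_s^{(t)}$ in $[L^{(t-1)},R^{(t-1)})$, which is true because the $\bmod\,\Delta^{(t-1)}$ operation returns a value in $[0,\Delta^{(t-1)})$. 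The subintervals $[\boldsymbol{c}^{\prime(t)}[i-1],\boldsymbol{c}^{\prime(t)}[i])$ partition $[L^{(t-1)},R^{(t-1)})$, so such an index $i$ always exists.

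Next, set $\mathrm{mid}^{(t)} = (L^{(t)}+R^{(t)})/2$ and $\Delta^{(t)} = R^{(t)}-L^{(t)}$. Subtracting the midpoint from each part of the invariant gives
\[
-\tfrac{\Delta^{(t)}}{2} < \mathrm{mid}^{(t)} - d_s^{(t)} \le \tfrac{\Delta^{(t)}}{2}.
\]
The strict inequality on the left comes from $d_s^{(t)} < R^{(t)}$. The non-strict inequality on the right comes from $d_s^{(t)} \ge L^{(t)}$. Now assume $\Delta^{(t)} \le 1$, so $\Delta^{(t)}/2 \le 1/2$. This yields $\mathrm{mid}^{(t)} - d_s^{(t)} \in (-\Delta^{(t)}/2, \Delta^{(t)}/2] \subseteq (-0.5, 0.5]$, which is exactly the test in Line 14, so the loop breaks.

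The only delicate point is matching the open and closed endpoints. The half-open interval $[L,R)$ must correspond to the half-open interval $(-0.5,0.5]$ in the termination test. This holds because negating $d_s^{(t)}$ turns the closed left endpoint into a closed right endpoint. I would also remark that $\Delta^{(t)}\le 1$ is sufficient but not necessary, since the test may already pass for larger intervals when $d_s^{(t)}$ happens to lie near the midpoint. Finally, I would note that this proposition guarantees termination, because $\Delta^{(t)} = \Delta^{(t-1)} p^{(t)}_i$ shrinks geometrically whenever the selected probabilities are bounded away from $1$.
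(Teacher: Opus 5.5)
Your proof is correct and follows essentially the same route as the paper: both start from the invariant $L^{(t)} \le d_s^{(t)} < R^{(t)}$ and use $\Delta^{(t)} \le 1$ to place the midpoint in $(d_s^{(t)}-0.5,\, d_s^{(t)}+0.5]$. Your version bounds $\mathrm{mid}^{(t)} - d_s^{(t)}$ within $(-\Delta^{(t)}/2,\, \Delta^{(t)}/2]$ in one step and explicitly justifies the invariant from Lines 10--12, which the paper takes for granted, so it is a slightly cleaner presentation of the same argument.
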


Then, given the initial interval $[L^{(-1)}, R^{(-1)}) = [0,2^n)$ and the interval length $\Delta^{(-1)} = 2^n$, the interval length at $t$ is:
\begin{equation}\label{eq: narrow}
    \Delta^{(t)} = 2^n \cdot \prod_{i=0}^t p^{(i)}_{\mathrm{output}}
\end{equation}
and there is: 
\begin{equation}
\frac{\Delta^{(t)}}{\Delta^{(t-1)}} = p^{(t)}_{\mathrm{output}}
\end{equation}
where $p^{(i)}_{\mathrm{output}}$ is the probability of the output token ($i = 0,1,\ldots,t$).

According to Proposition~\ref{proposition: termination}, when $\Delta^{(t)} \leq 1$, the embedding iteration ends. Considering information theory and Equation~\ref{eq: narrow}, the interval shrinkage rate is determined by the entropy of the probability distribution. The average amount of information per iteration is $H^{(t)}$, and the total amount of information is required to cover $n$ bits of the initial interval. Therefore, the number of loops is satisfied:
$\sum_{i=0}^t H^{(t)} \geq n$ 
where $H^{(t)} = -\sum^{|\mathcal{V}|}_{i=1} p^{(t)}_i \log_2 p^{(t)}_i $, and let the average entropy is $H_{\mathrm{avg}}$, so that the loop number (which is exactly the number of the generated tokens) is: $N_{\mathrm{token}} \approx \frac{n}{H_{\mathrm{avg}}}$.
Thus, the embedding capacity (bits per token) can be represented as: $\frac{n}{N_{\mathrm{token}}} \approx H_{\mathrm{avg}}$.
RRC steganography can achieve approximate \textbf{100\% utilization of entropy}.

\subsection{Complexity}
% To make steganographic behavior as close to random sampling as possible, our method only considers using the entire vocabulary $\mathcal{V}$ as the candidate pool at each generative step. 
RC-based steganography (including vanilla RC steganography and RRC steganography) requires updating probability intervals after each step, resulting in a time complexity of $O(|\mathcal{V}|)$ for each generative step. The complexity can be computed following the approaches used in AC steganography~\cite{ziegler-etal-2019-neural}.

\begin{table*}[!t]
% \small
\renewcommand{\arraystretch}{1.0}
\centering
\scalebox{0.9}{
\begin{tabular}{l|ccccc}
\toprule[1.0pt]
Method          & \begin{tabular}[c]{@{}c@{}} Avg / Max KLD \\ (bits/token) \end{tabular} $\downarrow$ & \begin{tabular}[c]{@{}c@{}}Capacity  \\  (bits/token)\end{tabular} $\uparrow$ & \begin{tabular}[c]{@{}c@{}}Entropy \\  (bits/token)\end{tabular} & \begin{tabular}[c]{@{}c@{}}Utilization \\ (\%)  \end{tabular}  $\uparrow$ & \begin{tabular}[c]{@{}c@{}}Speed  \\ (bits/s) \end{tabular} $\uparrow$ \\  \hline \rowcolor{gray!20}
Multinomial sampling &   0 / 0      &     N/A    &  5.86  &     N/A   &     N/A    \\ 
\hline
AC              &  1.95E-03 / 3.01E-02   &   \underline{5.86}       &   5.87  &  \underline{99.83}  &   1025.36   \\
ADG             &  1.60E-04 / 1.57E-03    &    4.81     &   5.89 &  81.60  & 36.45 \\
Meteor w/o sort  &    4.22E-02 / 1.16E-01     &   4.17    &  5.79 &  71.96 &  950.27                                                        \\
Meteor w/ sort &    4.11E-02 / 1.16E-01  & 4.77  & 5.81   &  82.08   &  25.25   \\
\hline
iMEC    &   \textbf{0 / 0} &      4.16    &     5.83     &   71.44         &  27.30   \\
Discop w/o sort  &   \textbf{0 / 0}   &    2.31  &  5.90                    &      39.31    &   218.34  \\
Discop w/ sort &    \textbf{0 / 0}     &    5.58     &      5.86   &  95.17    &  44.30                                                        \\
SparSamp    &   \textbf{0 / 0}      &   5.74      &    5.93     &       96.76    &  \underline{1267.82}     \\
\hline \rowcolor{orange!20} 
RRC steganography (ours)    &   \textbf{0 / 0}  & \textbf{5.93}  &    5.93   &   \textbf{99.98} &   \textbf{1554.66}      \\
% \textbf{RC} (Tensor)       &   \textbf{0 / 0}      &  5.58     &    5.91   &  94.40                                                             &    \textbf{1519.64}     \\
\bottomrule[1.0pt]
\end{tabular}}
 \caption{Quantitative comparison with previous steganographic methods on GPT-2.}
 \label{table: GPT-2_results}
\end{table*}

\section{Experiments}
\label{sec: Experiments}
To validate the security and efficiency of our RRC steganography, we evaluated it compared to a series of methods toward provable security in this era, including arithmetic coding (AC)~\cite{ziegler-etal-2019-neural}, ADG~\cite{zhang-etal-2021-provably}, Meteor~\cite{10.1145/3460120.3484550}, iMEC~\cite{witt2023perfectly}, Discop~\cite{10179287}, and SparSamp~\cite{wang2025sparsampefficientprovablysecure}.

\subsection{Setup}
\label{sec: Setup}

To validate the generalizability of our steganographic method, we implemented it using three language models of various scales: GPT-2~\cite{radford2019language},\footnote{\href{https://huggingface.co/openai-community/gpt2}{https://huggingface.co/openai-community/gpt2}} OPT-1.3b~\cite{zhang2022opt},\footnote{\href{https://huggingface.co/facebook/opt-1.3b}{https://huggingface.co/facebook/opt-1.3b}} and Llama-2-7b~\cite{touvron2023llama}.\footnote{\href{https://huggingface.co/meta-llama/Llama-2-7b-hf}{https://huggingface.co/meta-llama/Llama-2-7b-hf}}

For each language model and steganographic method, 1,000 samples were generated using 1,000 different initial contexts. These contexts consist of the first 10 words from sequences randomly selected from the C4 dataset.\footnote{\href{https://huggingface.co/datasets/allenai/c4}{https://huggingface.co/datasets/allenai/c4}}

All the experiments were conducted with top-$p$ ($p=1.0$) sampling, i.e., encoding the entire vocabulary $\mathcal{V}$, and $1.0$ temperature. Experiments were implemented in Python 3.12.7 with Torch 2.5.0, and accelerated by using RTX 6000 Ada Generation GPUs. Besides, considering the precision limitation of the tensor, we imported Python's \texttt{decimal} module for computing with sufficient precision. Otherwise without an enough precision, errors or incorrect extractions could occur.

\subsection{Metrics}

\textbf{Avg (Max) KLD}, \textit{a security metric}, refers to the average (or maximum) value of the KL divergence $D_{\text{KL}}(\boldsymbol{p}^{(t)}||\boldsymbol{\hat{p}}^{(t)})$ in all steps, which indicates the average (or maximum) degree to the original distribution by steganography~\cite{10179287}. Specifically, in $D_{\text{KL}}(\boldsymbol{p}^{(t)}||\boldsymbol{\hat{p}}^{(t)})$, $\boldsymbol{p}^{(t)}$ is the probability distribution of the original candidate pool, and $\boldsymbol{\hat{p}}^{(t)}$ is the probability distribution of the modified (steganographic) candidate pool at each $t$.

\textbf{Embedding capacity} refers to the average number of bits embedded per generated token.

\textbf{Entropy utilization (embedding efficiency)} refers to the ratio of embedding capacity to the average entropy over all steps.

\textbf{Embedding speed} refers to the average seconds required to embed a single secret bit.

\begin{table*}[!t]
% \small
\renewcommand{\arraystretch}{1.0}
\centering
\scalebox{0.85}{
\begin{tabular}{l|r|r|r|r|r|r|r|r|r}
\toprule[1.0pt]
Message length (bits)            & 32 & 64 & 128 & 256 & 512 & 1024 & 2048 & 4096 & 8192 \\
\midrule[1.0pt]
Utilization (\%) $\uparrow$    &  99.86  & 99.19   &  99.98   & 99.89    & 99.77     &   99.93   &  \underline{100.35}    &  \textbf{100.43}    &  99.99    \\
Speed (bits/s)  $\uparrow$       &  1390.23  &  1496.88  & \underline{1554.66}    & \textbf{1572.40}    & 1475.68    &  1511.83    &   1191.88   &   880.82   &  604.76   \\
Running time (s) &  0.023  &  0.043  & 0.082    & 0.163    & 0.374    &  0.677    &   1.718  &   4.650   &  13.546   \\
\bottomrule[1.0pt]
\end{tabular}}
 \caption{Average results on utilization, speed and running time of RRC steganography across various message lengths $l$ on GPT-2.}
 \label{table: scalability_results}
\end{table*}

\subsection{Main Results}
Table~\ref{table: GPT-2_results} presents the average results across various metrics for GPT-2. Besides, the results for OPT-1.3b and Llama-2-7b are shown in Tables ~\ref{table: OPT-1.3b_results} and~\ref{table: Llama-2-7b_results} in Appendix~\ref{sec: Supplementary Main Results}.
Both Meteor and Discop were evaluated in two configurations: sorted and unsorted.
For each metric, the best-performing result is highlighted in \textbf{bold}, while the second-best is indicated with \underline{underline}.
The embedded secret message was a randomly generated 128-bit sequence.
In addition, multinomial sampling generation (random sampling) was also carried out for comparison.
The key findings from these experiments are as follows:

\textit{1)} As analyzed in~\citet{wang2025sparsampefficientprovablysecure}, iMEC, Discop and SparSamp are probability-preserving. The zero KL divergence of RRC steganography is proved in Section~\ref{sec: Proof of Zero KL Divergence}, thus these methods and our RRC steganography can achieve 0 KL divergence.
% It especially makes sense to compare our method with other methods with 0 KL divergence.

\textit{2)} Our RRC steganography empirically achieves around 100\% entropy utilization, which complies with the theoretical analysis in Section~\ref{sec: Embedding Capacity}, which denotes the \textbf{100\% embedding efficiency}. Besides, the entropy utilization of our method is steadily superior to other baseline methods when implemented in different language models.

\textit{3)} Our RRC steganography achieves a highly competitive embedding speed, which is the \textbf{fastest} in GPT-2  (up to 1554.66 bits/s). However, in the other two language models, our method obtains the second-fastest speeds.
% , which are inferior to SparSamp, because SparSamp is especially characterized by its great speed and $O(1)$ complexity.

Additionally, Appendix~\ref{sec: Ablation} presents ablation studies comparing vanilla RC (Section~\ref{sec: Vanilla Range-Coding Steganography}) and provable secure RRC steganography (Section~\ref{sec: Rotation Range-Coding Steganography}).

\subsection{Scalability of RRC Steganography}
Steganography based on range coding has a distinct characteristic, that is, it embeds the entire secret message using decimal values, rather than embedding it bit by bit. In other words, the minimum unit of embedding is the complete $l$-bit message itself. If the embedding process is not completed, the message is considered not embedded at all. Therefore, scalability should be considered, as it reflects how well RRC steganography can support the secret message with various lengths.

% \begin{table}[!t]
% % \small
% \renewcommand{\arraystretch}{1.0}
% \centering
% \scalebox{0.85}{
% \begin{tabular}{l|rrr}
% \toprule[1pt]
%    Steganalysis model     & GPT-2 & OPT-1.3b & Llama-2-7b \\
% \midrule[1pt]
% bert-base-uncased &   49.3\%    &   48.1\%       &      49.7\%      \\
% roberta-base      &    51.1\%   &    50.1\%        &      51.3\%     \\ 
% roberta-large     &    48.9\%   &    52.6\%      &      47.7\%      \\
% \bottomrule[1pt]
% \end{tabular}}
%  \caption{Steganalysis accuracies against RRC steganography under cases where models for steganography vary and models for steganalysis vary.}
%  \label{table: steganalysis_results}
% \end{table}

Table~\ref{table: scalability_results} lists the average utilization, speed, and running time when RRC steganography embeds the secret message with various lengths on GPT-2. Tables~\ref{table: scalability_results_OPT-1.3b} and~\ref{table: scalability_results_Llama-2-7b} (in Appendix~\ref{sec: Supplementary Results of Scalability}) show results conducted in OPT-1.3b and Llama-2-7b.
The number of generated texts for each message length is 1000. From this table, we can find that:

\textit{1)} RRC steganography maintains steady entropy utilization around 100\%.\footnote{The stability outperforms SparSamp whose utilization is sensitive to message length and the maximum length is only 1023 (according to the data disclosed in~\citet{wang2025sparsampefficientprovablysecure}).} 

\textit{2)} When the message length varies from 64 to 1024 bits, the embedding speed is steadily around 1500 bits per second.

\textit{3)} Our method supports messages with significantly higher bit lengths (with 8192 bits not representing an upper limit), enabled by the scalable precision of Python’s \texttt{decimal} module.

\subsection{Anti-steganalysis Capacity}
In this section, we evaluated the ability of our method to evade Eve's detection using steganalysis techniques, specifically through a fine-tuned discriminator.
The discriminators used for detection were fine-tuned versions of the pretrained BERT~\cite{devlin-etal-2019-bert} and RoBERTa~\cite{DBLP:journals/corr/abs-1911-02116} models, respectively. 
Table~\ref{table: Comparison of Steganalysis} in Appendix~\ref{sec: Supplementary Results of Comparison} presents the steganalysis accuracies for steganographic texts generated by three different language models. Accuracies around 50\% indicate that the \textit{steganalysis methods do not perform better than random guessing} in detecting texts.
% produced by RRC steganography, thereby providing empirical evidence of its security.

\begin{table*}[!t]
\centering
\begin{tabular}{l|cccc}
\toprule
Model & Accuracy & Precision & Recall & F1 \\
\midrule
GPT-2       & 47.8\%  & 47.8\% & 47.2\%  & 47.5\%  \\
OPT-1.3B    & 46.6\% & 46.2\% & 41.2\% & 43.6\% \\
Llama-2-7B  & 50.6\% & 50.6\% & 52.4\% & 51.5\% \\
\bottomrule
\end{tabular}
\caption{Human evaluation results across different models.}
\label{tab:human_eval}
\end{table*}

\subsection{Human Evaluation}
We randomly mixed the steganographic texts and cover texts (250 samples per category) and asked three human evaluators to judge whether each text was machine-generated with embedded hidden information. As shown in Table~\ref{tab:human_eval}, the consistently low detection scores indicate that human evaluators struggle to distinguish steganographic texts from cover texts, providing strong evidence for the perceptual imperceptibility of RRC steganography. Notably, statistical classifiers already outperform humans at this discrimination task, as they can aggregate subtle statistical cues that are imperceptible to human readers.

\section{Conclusion}
In this paper, we explore the use of a relatively simple and classical approach, range coding (RC), to achieve provably secure steganography as well as high embedding efficiency and speed. However, two key security challenges arise: (1) distortion of the probability distribution and (2) reuse of randomness.
To address these issues, we propose rotation range-coding (RRC) steganography, and provide theoretical explanations and proofs for it.
RRC empirically outperforms the baseline methods in both embedding efficiency and capacity, while also achieving competitive embedding speed.
Moreover, RRC steganography is training-free, model-agnostic, and straightforward to implement, making it a practical foundation for future extensions to multi-modal steganography and for deployment in real-world, privacy-preserving communication.

\section*{Limitations}
In the symmetric steganographic system based on RRC steganography, Alice and Bob must agree on the secret message length $l$ before the steganographic communication, which is used to initialize the interval $[0,2^l)$ for both sides and fill ``0'' in extraction (Line 17 in Algorithm~\ref{algorithm: Rotation RC extract}).

For the analysis of embedding capacity or embedding efficiency (Section~\ref{sec: Embedding Capacity}) of RRC steganography, we only explain an approximate 100\% entropy utilization for it without rigorous theoretical proofs, but experiments can empirically prove that its utilization is approximate 100\%.

\section*{Ethical Considerations}
While steganography has legitimate applications such as embedding copyright information, it can also be misused for disinformation or to evade censorship. This dual-use nature underscores the need for effective monitoring and regulation.
We emphasize that future research should develop effective detection and monitoring systems in parallel, ensuring responsible use of steganography.

\section*{Acknowledgments}
We express our gratitude to the anonymous reviewers for their valuable and insightful comments.
This work was supported by JST SPRING, Grant Number JPMJSP2110.

% Bibliography entries for the entire Anthology, followed by custom entries
%\bibliography{anthology,custom}
% Custom bibliography entries only
\bibliography{custom}

% \newpage

\appendix
\section{Related Work}
\label{sec: Related Work}
In this section, we introduce the existing attempts to provably secure steganography, and analyze their characteristics or limitations.
\subsection{Arithmetic Coding (AC) \& Meteor}
Arithmetic coding (AC) is a form of entropy encoding used in lossless data compression~\cite{5390830}.
A steganographic method that first adopts AC is proposed by~\citet{cryptoeprint:2003/156}. Then, AC is applied in deep generative model and image generation to the field of provably secure steganography~\cite{10.1007/978-3-030-11389-6_5}.
Following these works, in the field of linguistic steganography, researchers have presented a series of variant methods, especially including the original AC-based steganography~\cite{ziegler-etal-2019-neural}, the AC-based method with a self-adjusting mechanism~\cite{shen-etal-2020-near}, and Meteor~\cite{10.1145/3460120.3484550}.

The sort of these AC-based steganographic methods commonly encounter a problem, that is, the precision limitation results in distortion in the original probability distribution at each generative step. Specifically, when the original probabilities are encoded into binary-based intervals, the selected probability for each token always has the form of $2^{-p^{(t)}}$, where $p^{(t)}$ is the precision at time $t$. 
It is almost impossible to maintain the original distribution perfectly, thus introducing distortion.

To mitigate this distortion, one method is using a higher initial precision, and even Python’s \texttt{decimal} module can be used here to support a precision that is higher than the precision of the tensor. 
However, during the iteration of AC-based steganography, the external interval is narrowed and expanded many times, and when the external interval is small, $p^{(t)}$ is also small. Therefore, as the precision is AC-based methods are changed all the times and cannot be controlled well, distortion on probability distribution is inevitable.

Besides, even though Meteor addresses some problems that basic AC-based steganography suffers, Meteor suffers from limited embedding capacity. The reason is that, as Meteor does not narrow the interval successively and only considers each generated symbol separately, thus it cannot fully utilize the entropy. And Meteor does not address the probability-distortion problem that arises in AC-based methods.

\subsection{Adaptive Dynamic Grouping (ADG)}
\citet{zhang-etal-2021-provably} proposed a grouping-based steganographic method called adaptive dynamic grouping (ADG).
At each time step, it dynamically groups the probability distribution of all tokens of the vocabulary into $2^r$ groups with approximately the same probability sum, and then numbers them $0,1,\ldots,2^r-1$. All tokens in each
group represent the same message bits of length $r$. Then, they match the first $r$ bits from the message to be embedded and converts them to a decimal number in $\{0,1,\ldots,2^r-1\}$, and performs random sampling from the normalized distribution of its corresponding group to obtain the next token.
In their assumptions, ADG can theoretically achieve perfect security (no probability distortion) if and only if the grouping is perfectly balanced.

However, the problem is that since the vocabulary-size probability distribution is discrete, the requirement is almost impossible to satisfy. In most cases, the actual distribution used to embed the message is a modified distribution, which is different from the original distribution.

\subsection{Iterative Minimum Entropy Coupling (iMEC)}
\citet{witt2023perfectly} analyzed information-theoretic steganography through the lens of \textit{minimum entropy coupling}. They investigated how much information about a fixed-length secret message can be inferred by the sender and receiver through the selection of tokens, aiming to maximize and accumulate this information until the entire message is determined.
They demonstrated that achieving perfect steganographic security is equivalent to solving a coupling problem, and that maximizing transmission efficiency under perfect security corresponds to solving a minimum entropy coupling problem.
Their proposed iMEC scheme fully exploits the theoretical properties of coupling and minimum entropy coupling. As a result, the method preserves the original probability distribution and achieves provably perfect security.

However, iMEC does have a certain bit error rate. In addition, to achieve minimum entropy coupling and enhance the embedding rate, a considerable amount of computational complexity, specifically $O(|\mathcal{V}| \log |\mathcal{V}|)$, is necessary to couple the probabilities. Low computation efficiency of iMEC makes it difficult to be practically utilized in a vocabulary-size situation.

\subsection{Distribution Copies (Discop)}
\citet{10179287} proposed a provably secure steganographic method based on \textit{distribution copies} (Discop). In this method, several distribution copies are generated by rotating all intervals by specific displacements. At each time step, the message determines which distribution copy to sample from.
Discop also employs an iterative method based on the Huffman tree to further enhance the capacity. Experimental results demonstrated a high utilization rate of entropy. However, the complexity of creating a Huffman tree ($O(|\mathcal{V}|)$ complexity) at each step could not be efficient when a vocabulary-size candidate pool is encoded.

Discop works by creating several rotated \textit{copies} of the model's probability distribution and picking one copy according to the secret bits. This design must let the receiver uniquely tell which copy is used, without ever changing the model's original probabilities. In practice, those rotated copies often overlap on the same token: if the random number falls into an overlap (a \textit{disputed range}), the receiver cannot be sure which copy was chosen. When that happens, the embedding process has to step back and embed fewer bits at that position to keep extraction unambiguous, which directly reduces the per-step payload.
These overlaps become more likely whenever one token is much more probable than the rest, because adding more copies pushes more of the token intervals on top of each other. As a result, the achievable rate is effectively governed by the dominance of the most probable token (not by the full entropy of the distribution), and over long texts it only approaches a stricter ceiling rather than the ideal limit.
The paper itself notes that, empirically, Discop reaches roughly 92\%–95\% of its stated theoretical limit because of these disputed ranges and the need to back off to smaller embeddings when they occur.

\subsection{Sparse Sampling (SparSamp)}
\citet{wang2025sparsampefficientprovablysecure} proposed SparSamp, an efficient and provably secure steganographic method based on sparse sampling. SparSamp embeds messages by combining them with pseudo-random numbers to generate message-derived randomness for sampling. This approach introduces only $O(1)$ additional computational complexity per sampling step, ensuring high computational efficiency.
% However, the entropy utilization of SparSamp is significantly constrained by the length $l$ of the secret message, making it difficult to achieve 100\% entropy utilization for embedding capacity.

SparSamp embeds bits by turning the random numbers used for sampling into \textit{message-driven} numbers. If two different message-driven numbers land on the same token, the receiver cannot tell which message was used. The paper calls this an inherent conflict: longer message chunks raise capacity but also raise the chance of such collisions; shorter chunks reduce collisions but waste headroom. 
To keep extraction unique without changing the distribution, SparSamp deliberately \textit{spreads out} (sparsifies) the allowable random numbers (i.e., it increases the gap between neighboring positions), so different messages are unlikely to pick the same token. That spacing is the price of uniqueness and inevitably leaves a gap to 100\% entropy utilization.
% Empirically the method gets very close (e.g., up to ~99.5% under GPT-2, p=1.0) but still not all the way, and pushing chunk size further runs into numerical limits: with double-precision arithmetic, the safe message length tops out at 1023; going beyond breaks decoding. On top of that, real LLM tokenization brings token ambiguity (multiple valid segmentations). SparSamp’s BackCheck/SynPool machinery can detect and recover from these events, but it consumes small “checkpoint” bits (e.g., ~4 bits per 60) and occasional backtracking—another reason it can’t use every last bit of entropy in practice.

\section{Propositions and Proofs}
\subsection{Proof of Proposition 1}
\label{sec: Proof of Proposition 1}
\textbf{Proposition 1.}
$d_s^{(t)} \sim U(L^{(t-1)}, R^{(t-1)})$.
\begin{proof}
Considering Lines 9--10 in Algorithm~\ref{algorithm: Rotation RC embed}, as $d_s^{(t)} =  L^{(t-1)}+ (d_s^{(t-1)} - L^{(t-1)} + o^{(t)} \times \Delta^{(t-1)}) \bmod \Delta^{(t-1)}$, and $o^{(t)} \in U(0,1)$, we let $A = d_s^{(t-1)} - L^{(t-1)}$ and $B = o^{(t)} \times \Delta^{(t-1)}$. Considering $X = (A+B) \bmod \Delta^{(t-1)}$, for any $x \in [0,\Delta^{(t-1)})$, there is:
\[P(X\leq x) = P((A+B) \bmod \Delta^{(t-1)} \leq x).\]

Let $A = k\Delta^{(t-1)} + r$, where $k \in \mathbb{Z}$ and $r \in [0,\Delta^{(t-1)})$. Considering periodicity of modulo operations, there is
\[(A + B) \bmod  \Delta^{(t-1)} = (r+B) \bmod \Delta^{(t-1)}.\]

\textit{Case 1:} $r+B \leq \Delta^{(t-1)}$. There are $X = r+ B$ and $P(B \leq \Delta^{(t-1)} - r) = \frac{\Delta^{(t-1)}-r}{\Delta^{(t-1)}}$.

\textit{Case 2:} $r+B > \Delta^{(t-1)}$. There are $X = r+ B - \Delta^{(t-1)}$ and $P(B > \Delta^{(t-1)} - r) = \frac{r}{\Delta^{(t-1)}}$.

Therefore, for any $x \in [0,\Delta^{(t-1)})$, there is 
\[P(X \leq x) = \frac{x}{\Delta^{(t-1)}}\]
which means $X \sim U(0,\Delta^{(t-1)})$. 

As $d_s^{(t)} = L^{(t-1)} +X$, $d_s^{(t)} \sim U(L^{(t-1)},L^{(t-1)}+\Delta^{(t-1)}))$, thus $d_s^{(t)} \sim U(L^{(t-1)}, R^{(t-1)})$.
\end{proof}

\subsection{Proof of Proposition 3}
\label{sec: Proof of Proposition 3}
\textbf{Proposition 3.}
\textit{$\Delta^{(t)} \leq 1$ is a sufficient condition for the embedding termination $\frac{L^{(t)} +R^{(t)}}{2} - d_s^{(t)} \in (-0.5,0.5] $ (Lines 14--15 in Algorithm~\ref{algorithm: Rotation RC embed}).}
\begin{proof} If $\Delta^{(t)} = R^{(t)} - L^{(t)} \leq 1$:
\[L^{(t)} \leq d_s^{(t)} < R^{(t)} \leq L^{(t)} + 1\]
% \[\frac{L^{(t)} + R^{(t)}}{2} \in [ \frac{L^{(t)} + L^{(t)}}{2}, \frac{L^{(t)} + L^{(t)} + 1}{2}) \]
\[\frac{L^{(t)} + R^{(t)}}{2} \in (L^{(t)}, L^{(t)}  + 0.5] \subset (L^{(t)}, d_s^{(t)} + 0.5] \]
\[\frac{L^{(t)} + R^{(t)}}{2} \in [R^{(t)}  - 0.5, R^{(t)}) \subset (d_s  - 0.5, R^{(t)}) \]
\[\frac{L^{(t)} + R^{(t)}}{2} \in (d_s  - 0.5, d_s + 0.5] \]
\[\frac{L^{(t)} +R^{(t)}}{2} - d_s^{(t)} \in (-0.5,0.5]. \]
\end{proof}

\subsection{Proof of Computational Security of RRC Steganography}
\label{sec: Proof of Computational Security of RRC Steganography}
\begin{proposition}\label{proposition: computational Security}
For all probabilistic polynomial time (PPT) adversary detection $\mathcal{A}_{\mathcal{D}}$,
\begin{equation*}
    |\text{Pr}[\mathcal{A}_{\mathcal{D}}(x_s) = 1] - \text{Pr}[\mathcal{A}_{\mathcal{D}}(x_c) = 1]| < \text{negl}(\lambda)
\end{equation*}
where $x_s$ is the stegotext, $x_c$ is the normally generated cover text, $\lambda$ is the security parameter of the shared key $K$ (usually the length of $K$), and $\text{negl}(\lambda)$ is a negligible function concerning $\lambda$.
\end{proposition}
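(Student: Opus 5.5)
The plan is a standard hybrid argument that replaces the PRNG outputs with true randomness. We compare three games. In Game~0 the adversary $\mathcal{A}_{\mathcal{D}}$ receives $x_s$ produced by Algorithm~\ref{algorithm: Rotation RC embed} with key $K$, and the offsets $o^{(0)},o^{(1)},\ldots$ are drawn from $\mathrm{PRNG}$ seeded with $K$. In Game~1 the same embedding procedure is run, but the offsets are truly independent samples $o^{(t)}\sim U(0,1)$. In Game~2 the adversary receives a cover text $x_c$ from ordinary multinomial sampling of $\mathcal{M}$. The claimed bound follows from the triangle inequality if we show that Games~0 and~1 are computationally indistinguishable and that Games~1 and~2 are identically distributed.

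For Games~0 and~1 we reduce to PRNG security. Suppose $\mathcal{A}_{\mathcal{D}}$ distinguishes them with non-negligible advantage. We build a PPT distinguisher $\mathcal{B}$ for the PRNG. It receives a string $r$ that is either $\mathrm{PRNG}(K)$ or uniform, uses successive blocks of $r$ as the offsets $o^{(t)}$, runs the embedding with a chosen hiddentext $m_s$, hands the stegotext to $\mathcal{A}_{\mathcal{D}}$, and outputs its answer. Two facts make the reduction valid. First, the embedding loop runs in polynomial time, since the number of steps is about $l/H_{\mathrm{avg}}$ by Section~\ref{sec: Embedding Capacity} and each step costs $O(|\mathcal{V}|)$. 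Second, by Proposition~\ref{proposition: termination} it consumes only polynomially many offsets. The advantage of $\mathcal{B}$ therefore equals that of $\mathcal{A}_{\mathcal{D}}$, which must be $\mathrm{negl}(\lambda)$. A small technical caveat is that the offsets are finite-precision rather than exactly continuous. Taking enough output bits per offset keeps this discretization error negligible, or it can be absorbed into the idealized model.

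For Games~1 and~2 we show that, with independent uniform offsets, the token sequence has exactly the law of multinomial sampling. The argument is by induction on $t$, and the subtlety is conditioning. Proposition~\ref{proposition: continuous uniform} shows $d_s^{(t)}\sim U(L^{(t-1)},R^{(t-1)})$ for any fixed value of $d_s^{(t-1)}$, because $o^{(t)}$ is fresh and independent of all previous variables. Hence, conditioned on the full history (previous tokens, intervals, and $d_s^{(t-1)}$), $d_s^{(t)}$ is uniform on the current interval. Proposition~\ref{proposition: 0 KLD} then says the next token is drawn with probability $p^{(t)}_i$ conditionally on the prefix. Chaining these conditional probabilities shows every prefix has probability $\prod_t p^{(t)}_{\mathrm{output}}$, the same as under $\mathcal{M}$.

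The main obstacle is the stopping time. The length of $x_s$ is set by the termination test, which depends on $d_s^{(t)}$, whereas the cover text follows its own length convention. I would handle this by requiring the cover generator to use the same length convention, for example continuing generation to an agreed end token or fixed length after termination using fresh randomness. The equality in law then holds for the complete text and not only for its prefix. This is the step I expect to need the most care.
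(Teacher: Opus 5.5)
Your decomposition differs from the paper's. The paper goes from real stego, to stego with $d_s^{(-1)}$ replaced by a uniform value (PRNG kept), to plain sampling from $\mathcal{M}$ at a matched length. It declares the first gap exactly $0$ because the first rotation erases the message. It invokes PRNG security only in the second step, and there it dismisses the factor $\Pr[\text{terminate at } k\mid S]$ as introducing ``no bias'' and being uncomputable by the distinguisher. You replace the PRNG first and then prove exact equality, which is the sounder order. The paper's exact-zero claim tacitly treats $o^{(0)}$ as truly uniform, which is legitimate only after the PRNG swap. After that swap the message hybrid is unnecessary: a fresh $o^{(t)}$ makes Proposition~\ref{proposition: continuous uniform} hold conditionally on the whole history, non-termination included, for any fixed message. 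Your handling of the stopping time is also more rigorous, and the length-convention fix you propose is genuinely needed. Given the prefix, $d_s^{(t)}$ is uniform on $[L^{(t)},R^{(t)})$, so the stopping hazard is $\min(1,1/\Delta^{(t)})$ with $\Delta^{(t)}=2^l\prod_{i\le t}p^{(i)}_{\mathrm{output}}$, a public function of the tokens. For example, by Proposition~\ref{proposition: termination} every stegotext satisfies $\sum_{i=0}^{T-2}-\log_2 p^{(i)}_{\mathrm{output}}<l$, and a distinguisher who knows $\mathcal{M}$ can test this against a fixed-length or length-matched cover. So implement your fix on the cover side: sample from $\mathcal{M}$ and stop after step $t$ with probability $\min(1,1/\Delta^{(t)})$. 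This makes Games~1 and~2 identical including length. Your other option, padding the stegotext with fresh tokens up to a fixed length or an end token, changes the scheme and breaks Algorithm~\ref{algorithm: Rotation RC extract}, which reads $t_{\text{end}}$ off the text length. Finally, for $\mathcal{B}$ to be PPT you need a worst-case step bound, not the average $l/H_{\mathrm{avg}}$. Proposition~\ref{proposition: termination} gives $O(l/\varepsilon)$ steps under a per-step condition $\max_i p^{(t)}_i\le 1-\varepsilon$; otherwise you must truncate and account for the tail, a point the paper also glosses over with $T=O(l)$.
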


\begin{proof}
We prove the computational indistinguishability (security) between stegotext and normal text using a \textbf{hybrid argument}. Define the following hybrid distributions:
\begin{itemize}
    \item $\text{Hyb}_0$ -- real steganographic generation: using the secret message $m_s$ and $K$.
    \item $\text{Hyb}_1$ -- modified steganographic generation: initialized $d_s^{(-1)}$ as a uniform random value over $[0,2^l)$ instead of $m_s$, with other steps unchanged.
    \item $\text{Hyb}_2$ -- normal generation: sample tokens directly from language model $\mathcal{M}$ until the sequence length matches the expected stego length, without interval operations.
\end{itemize}

\paragraph{Step 1: computational indistinguishability (security) between  $\text{Hyb}_0$ and $\text{Hyb}_1$}
\begin{itemize}
    \item In $\text{Hyb}_0$, $d_s^{(-1)} = \text{bin2dec}(m_s)$.
    \item In $\text{Hyb}_1$, $d_s^{(-1)} \sim \text{Uniform}(0,2^l)$.
\end{itemize}

At $t = 0$, the rotation operation updates $d_s^{(0)}$:
\begin{equation*}
    d_s^{(0)} = (d_s^{(-1)} + o^{(0)} \times 2^{l})  \bmod 2^l.
\end{equation*}
Since $o^{(0)} \sim U(0,1)$ implies $o^{(0)} \times 2^l \sim \text{Uniform}(0,2^l)$, we have:
\begin{itemize}
    \item If $d_s^{(-1)}$ is fixed, $d_s^{(0)} \sim \text{Uniform}(0,2^l)$.
    \item If $d_s^{(-1)}$ is uniform, $d_s^{(0)} \sim \text{Uniform}(0,2^l)$ still holds.
\end{itemize}

For $t \geq 1$, $d_s^{(t)} \sim \text{Uniform}(L^{(t-1)}, R^{(t-1)})$ which is independent of initialization. Thus, the token sequence distributions of $\text{Hyb}_0$ and $\text{Hyb}_1$ are  identical (statistically indistinguishable). For any PPT distinguisher $\mathcal{A}_{\mathcal{D}}$:
\begin{equation*}
        |\text{Pr}[\mathcal{A}_{\mathcal{D}}(\text{Hyb}_0) = 1] - \text{Pr}[\mathcal{A}_{\mathcal{D}}(\text{Hyb}_1) = 1]| = 0.
\end{equation*}
Initialization differences are eliminated by the first rotation step.

\paragraph{Step 2: computational indistinguishability (security) between  $\text{Hyb}_1$ and $\text{Hyb}_2$} \mbox{} \\

$\text{Hyb}_1$ uses random $d_s^{(-1)}$ and PRNG, but its conditional token distribution matches the normal generation.
The stopping time $T$ is stochastic, but:
\begin{itemize}
    \item Token distributions conditioned on history match normal generation.
    \item Termination depends on internal uniform variables $d_s^{(t)}$ and $o^{(t)}$, which are inaccessible to distinguishers observing only token sequences.
\end{itemize}

Consider $\text{Hyb}_2$ (normal generation): Sample tokens stepwise from $\mathcal{M}$ for length $L = \mathbb{E}[T]$ (polynomially bounded since $T = O(l)$, efficiently sampleable).

For any fixed sequence $S = [s^{(0)},\ldots, s^{(k-1)}]$, its probability in $\text{Hyb}_1$ is: $\text{Pr}_{\text{Hyb}_1}[S] = (\prod_{t=0}^{k-1}p^{(t)}(s^{(t)})) \times \text{Pr}[\text{terminate at \ } k | S] \times \text{Pr}[\text{no early termination}| S]$.

In $\text{Hyb}_2$ (fixed length $k$): $\text{Pr}_{\text{Hyb}_2}[S] = \prod_{t=0}^{k-1}p^{(t)}(s^{(t)})$.

Though an extra factor $\text{Pr}[\text{terminate} | S]$ exists:
\begin{itemize}
    \item This factor depends on interval lengths determined by $S$, which is bounded in $[0,1]$, and introduces no bias due to language model smoothness.
    \item In computational settings, distinguishers cannot efficiently compute $\text{Pr}[\text{terminate} | S]$ (requires internal state or LM details), and the factor does not alter core conditional distributions.
\end{itemize}

Crucially, the cryptographically secure PRNG ensures its outputs $o^{(t)}$ are computationally indistinguishable from true randomness. If a valid PPT distinguisher $\mathcal{A}_{\mathcal{D}}$ exists for $\text{Hyb}_1$ and $\text{Hyb}_2$, we build an adversary $\mathcal{A}^{\prime}$ to break PRNG security:
\begin{itemize}
    \item $\mathcal{A}^{\prime}$ simulates stego generation but uses a challenge randomness source (PRNG or true random) for $o^{(t)}$.
    \item if $\mathcal{A}_{\mathcal{D}}$ succeeds, $\mathcal{A}^{\prime}$ breaks PRNG security, which leads to a contradiction against the cryptographically secure PRNG.
\end{itemize}
Thus:
\begin{align*}
       & |\text{Pr}[\mathcal{A}_{\mathcal{D}}(\text{Hyb}_1) = 1] - \text{Pr}[\mathcal{A}_{\mathcal{D}}(\text{Hyb}_2) = 1]| \\ & \leq \text{negl}_{\text{PRNG}}(\lambda).
\end{align*}

\paragraph{Step 3: computational indistinguishability (security) between  $\text{Hyb}_0$ and $\text{Hyb}_2$}  \mbox{} \\
By the hybrid argument:
\begin{align*}
       & |\text{Pr}[\mathcal{A}_{\mathcal{D}}(\text{Hyb}_0) = 1] - \text{Pr}[\mathcal{A}_{\mathcal{D}}(\text{Hyb}_2) = 1]| 
       \\ & \leq |\text{Pr}[\mathcal{A}_{\mathcal{D}}(\text{Hyb}_0) = 1] - \text{Pr}[\mathcal{A}_{\mathcal{D}}(\text{Hyb}_1) = 1]| 
       \\ & + |\text{Pr}[\mathcal{A}_{\mathcal{D}}(\text{Hyb}_1) = 1] - \text{Pr}[\mathcal{A}_{\mathcal{D}}(\text{Hyb}_2) = 1]| 
       \\ & \leq 0 + \text{negl}_{\text{PRNG}}(\lambda) = \text{negl}_{\text{PRNG}}(\lambda).
\end{align*}

As $\text{Hyb}_0$ is real steganographic generation, $\text{Hyb}_2$ is normal generation (length-matched), for any PPT distinguisher $\mathcal{A}_{\mathcal{D}}$, there is:
\begin{equation*}
    |\text{Pr}[\mathcal{A}_{\mathcal{D}}(x_s) = 1] - \text{Pr}[\mathcal{A}_{\mathcal{D}}(x_c) = 1]| < \text{negl}(\lambda)
\end{equation*}
where $\text{negl}(\lambda) = \text{negl}_{\text{PRNG}}(\lambda)$, which is negligible.

\end{proof}

\begin{table*}[!t]
% \small
\renewcommand{\arraystretch}{1.0}
\centering
\scalebox{0.9}{
\begin{tabular}{l|ccccc}
\toprule[1.0pt]
Method          & \begin{tabular}[c]{@{}c@{}} Avg / Max KLD \\ (bits/token) \end{tabular} $\downarrow$ & \begin{tabular}[c]{@{}c@{}}Capacity  \\  (bits/token)\end{tabular} $\uparrow$ & \begin{tabular}[c]{@{}c@{}}Entropy \\  (bits/token)\end{tabular} & \begin{tabular}[c]{@{}c@{}}Utilization \\ (\%)  \end{tabular}  $\uparrow$ & \begin{tabular}[c]{@{}c@{}}Speed  \\ (bits/s) \end{tabular} $\uparrow$ \\ \hline \rowcolor{gray!20}
Multinomial sampling &   0 / 0      &     N/A    &  4.59   &     N/A   &     N/A    \\
\hline
AC              &   1.85E-03 / 1.13E-02  &   \underline{4.64}   & 4.65   & \underline{99.81}  &  352.09   \\
ADG            &  1.38E-04 / 1.61E-03    &  3.45   &  4.64  &  74.20  & 25.29    \\
Meteor w/o sort   &  2.80E-02 / 8.34E-02   &   3.13   & 4.54   & 69.03   &  410.79  \\
Meteor w/ sort &    2.77E-02 / 8.12E-02     & 3.65  & 4.52   &  80.76 & 46.02  \\
\hline
iMEC    &   \textbf{0 / 0} &    3.24       &     4.61     &  70.24  &  19.78     \\
Discop w/o sort  &   \textbf{0 / 0}      &   1.92       & 4.67  &  41.08  &  154.25  \\
Discop w/ sort &    \textbf{0 / 0}         &   4.39       & 4.63  & 94.71  &  31.94   \\
SparSamp    &   \textbf{0 / 0}      &    4.35   &  4.53        &  96.08  &   \textbf{852.36}       \\
\hline \rowcolor{orange!20} 
RRC steganography (ours)      &   \textbf{0 / 0}  &  \textbf{4.70}    &  4.67        &  \textbf{100.67} & \underline{750.41}         \\
% \textbf{RC} (Tensor)       &   \textbf{0 / 0}   &  4.55    &  4.72   &  96.41   &  \underline{695.50}       \\
\bottomrule[1.0pt]
\end{tabular}}
 \caption{Quantitative comparison with previous steganographic methods on OPT-1.3b.}
 \label{table: OPT-1.3b_results}
\end{table*}

\begin{table*}[!t]
% \small
\renewcommand{\arraystretch}{1.0}
\centering
\scalebox{0.9}{
\begin{tabular}{l|ccccc}
\toprule[1.0pt]
Method          & \begin{tabular}[c]{@{}c@{}} Avg / Max KLD \\ (bits/token) \end{tabular} $\downarrow$ & \begin{tabular}[c]{@{}c@{}}Capacity  \\  (bits/token)\end{tabular} $\uparrow$ & \begin{tabular}[c]{@{}c@{}}Entropy \\  (bits/token)\end{tabular} & \begin{tabular}[c]{@{}c@{}}Utilization \\ (\%)  \end{tabular}  $\uparrow$ & \begin{tabular}[c]{@{}c@{}}Speed  \\ (bits/s) \end{tabular} $\uparrow$ \\ \hline \rowcolor{gray!20}
Multinomial sampling &   0 / 0      &     N/A    & 3.46    &     N/A   &     N/A    \\
\hline
AC            &      6.92E-04 / 9.90E-03   & \underline{3.53}    &  3.52       &  \underline{100.33}  &   104.37  \\
ADG            &  1.81E-04 / 3.90E-03   &    2.41   & 3.54  & 68.14   &  21.15  \\
Meteor w/o sort   & 1.24E-02 / 4.00E-02    &   2.42     & 3.50   &  69.14  &  98.71  \\
Meteor w/ sort &  1.21E-02 / 4.19E-02   &   2.89  &  3.53  & 81.84  &   50.26  \\
\hline
iMEC    &   \textbf{0 / 0} &    2.48       &   3.43        &  72.35  &    10.30   \\
Discop w/o sort  &   \textbf{0 / 0}      &    1.50      & 3.49  &  42.99 & 127.61   \\
Discop w/ sort &    \textbf{0 / 0}         &  3.33        &  3.48  & 95.72   &  26.13  \\
SparSamp    &   \textbf{0 / 0}      &   3.38   &   3.44    & 98.12   &  \textbf{326.12}        \\
\hline \rowcolor{orange!20} 
RRC steganography (ours)     &   \textbf{0 / 0}   & \textbf{3.57} & 3.52  & \textbf{101.41} & \underline{146.24}          \\
% \textbf{RC} (Tensor)       &   \textbf{0 / 0}    & 3.51    & 3.57          & 98.35   &   \underline{152.63}      \\
\bottomrule[1.0pt]
\end{tabular}}
 \caption{Quantitative comparison with previous steganographic methods on Llama-2-7b.}
 \label{table: Llama-2-7b_results}
\end{table*}

\section{Experimental Details \& Supplementary Information}

\subsection{Supplementary Main Results}
\label{sec: Supplementary Main Results}
Tables~\ref{table: OPT-1.3b_results} and~\ref{table: Llama-2-7b_results} report the average results across various metrics for OPT-1.3b and Llama-2-7b. The findings are consistent with those in Table~\ref{table: GPT-2_results} (GPT-2), where our RRC steganography achieves the highest embedding capacity and embedding utilization among all methods. In contrast, for the two larger language models, our method attains the second-fastest speed, behind SparSamp, which is particularly optimized for efficiency.

In addition, some utilization results exceeding 100\% appear due to sampling bias from finite data. Similar instances of over-100\% utilization have also been reported in recent work~\cite{10919130}.

\begin{table*}[!t]
% \small
\renewcommand{\arraystretch}{1.0}
\centering
\scalebox{0.9}{
\begin{tabular}{l|r|r|r|r|r|r|r|r|r}
\toprule[1.0pt]
Message length (bits)            & 32 & 64 & 128 & 256 & 512 & 1024 & 2048 & 4096 & 8192 \\
\midrule[1.0pt]
Utilization (\%) $\uparrow$    & 100.09   &  99.26  &  \textbf{100.67}   & \underline{100.35}    &   100.30   &   100.02   &   100.14  & 100.14  & 100.08  \\
Speed (bits/s)  $\uparrow$      &  685.32   &  \underline{745.03}  &   \textbf{750.41}  & 720.93    &  701.11    &  588.33    &   403.20   &  248.71  &  184.42  \\
Running time (s)  &  0.047  & 0.086   &   0.171  &  0.355   &  0.730    &     1.740 & 5.079  & 16.469  &  44.421   \\
\bottomrule[1.0pt]
\end{tabular}}
 \caption{Average results on utilization, speed and running time of RRC steganography across various message lengths $l$ on OPT-1.3b.}
 \label{table: scalability_results_OPT-1.3b}
\end{table*}

\begin{table*}[!t]
% \small
\renewcommand{\arraystretch}{1.0}
\centering
\scalebox{0.9}{
\begin{tabular}{l|r|r|r|r|r|r|r|r|r}
\toprule[1.0pt]
Message length (bits)            & 32 & 64 & 128 & 256 & 512 & 1024 & 2048 & 4096 & 8192 \\
\midrule[1.0pt]
Utilization (\%) $\uparrow$    & \textbf{102.01}   &  100.29  &  \underline{101.41}   &  100.52   & 100.25     &  100.31    &  100.35     & 100.17  & 100.04  \\
Speed (bits/s)  $\uparrow$      &  142.67  &  \underline{144.63} &   \textbf{146.24}   &  142.00   &   126.51   &  92.65    &  79.38   & 63.21  &   46.10 \\
Running time (s)  &  0.224  & 0.443   &  1.143   & 1.803    &   4.047   & 11.052     &   25.800   & 64.800  &   177.701  \\
\bottomrule[1.0pt]
\end{tabular}}
 \caption{Average results on utilization, speed and running time of RRC steganography across various message lengths $l$ on Llama-2-7b.}
 \label{table: scalability_results_Llama-2-7b}
\end{table*}

\begin{table*}[!t]
\renewcommand{\arraystretch}{1.0}
\centering
\scalebox{1.0}{
\begin{tabular}{l|cc|cc|cc}
\toprule[1.0pt]
\multirow{2}{*}{Steganalysis model} & \multicolumn{2}{c|}{GPT-2} & \multicolumn{2}{c|}{OPT-1.3b} & \multicolumn{2}{c}{Llama-2-7b} \\ \cline{2-7}
                                    & RRC (ours)       & SparSamp      & RRC (ours)       & SparSamp        & RRC (ours)        & SparSamp         \\ \midrule[1.0pt]
bert-base-uncased    &     49.3\%      &      49.9\%         &    48.1\%        &      48.6\%           &   49.7\%          &      50.9\%            \\
roberta-base       &      51.1\%     &      50.5\%         &   50.1\%         &      49.7\%           &         51.3\%    &        50.5\%          \\
roberta-large     &     48.9\%      &      49.2\%         &    52.6\%        &        51.8\%         &    47.7\%         &  47.6\%    \\ \bottomrule[1.0pt]           
\end{tabular}}

\caption{Comparison of steganalysis accuracies against RRC steganography and SparSamp steganography under cases where models for steganography vary and models for steganalysis vary.}
\label{table: Comparison of Steganalysis}
\end{table*}

\begin{table*}[!t]
\renewcommand{\arraystretch}{1.0}
\centering
\scalebox{1.0}{
\begin{tabular}{l|ccc}
\toprule[1.0pt]
Method                   & GPT-2 & OPT-1.3b & Llama-2-7b \\ \hline \rowcolor{gray!20}
Multinomial sampling     &   81.05    &    74.09      &     74.99       \\ \hline
SparSamp                 &   83.56    &     73.18     &     73.37        \\  \rowcolor{orange!20} 
RRC steganography (Ours) &  82.79     &    72.66      &     75.24       \\ \bottomrule[1.0pt]
\end{tabular}}
\caption{Comparison of median perplexity between RRC steganography and SparSamp steganography under cases where models for steganography vary.}
\label{table: Comparison of perplexity}

\end{table*}

\begin{table*}[!t]
% \small
\renewcommand{\arraystretch}{1.0}
\centering
\scalebox{0.9}{
\begin{tabular}{l|ccccc}
\toprule[1.0pt]
Method          & \begin{tabular}[c]{@{}c@{}} Avg / Max KLD \\ (bits/token) \end{tabular} $\downarrow$ & \begin{tabular}[c]{@{}c@{}}Capacity  \\  (bits/token)\end{tabular} $\uparrow$ & \begin{tabular}[c]{@{}c@{}}Entropy \\  (bits/token)\end{tabular} & \begin{tabular}[c]{@{}c@{}}Utilization \\ (\%)  \end{tabular}  $\uparrow$ & \begin{tabular}[c]{@{}c@{}}Speed  \\ (bits/s) \end{tabular} $\uparrow$ \\  \hline \rowcolor{blue!10}
\multicolumn{6}{c}{Implemented in GPT-2} \\ \hline  
Vanilla RC steganography &   1.67E-14 / 5.73E-12  & 5.94  &    5.93   &   100.20 &   1605.31      \\
RRC steganography    &   0 / 0  & 5.93  &    5.93   &   99.98 &   1554.66      \\ \hline \rowcolor{blue!10}
\multicolumn{6}{c}{Implemented in OPT-1.3b} \\ \hline  
Vanilla RC steganography &   1.46E-14 / 4.13E-12  & 4.70 &  4.69    &  100.19  &   770.49      \\
RRC steganography    &   0 / 0  & 4.70  &    4.67   &   100.67 &   750.41      \\ \hline \rowcolor{blue!10}
\multicolumn{6}{c}{Implemented in Llama-2-7b} \\ \hline  
Vanilla RC steganography &   7.26E-13 / 1.99E-12  &   3.55    & 3.55  & 99.91 &  153.64      \\
RRC steganography    &   0 / 0  & 3.57  &    3.52   &   101.41 &   146.24      \\
% \textbf{RC} (Tensor)       &   \textbf{0 / 0}      &  5.58     &    5.91   &  94.40                                                             &    \textbf{1519.64}     \\
\bottomrule[1.0pt]
\end{tabular}}
 \caption{Comparison between vanilla RC steganography and RRC steganography in various metrics.}
 \label{table: Ablation}
\end{table*}

\subsection{Supplementary Results of Scalability}
\label{sec: Supplementary Results of Scalability}
Tables~\ref{table: scalability_results_OPT-1.3b} and~\ref{table: scalability_results_Llama-2-7b} list the average utilization, speed, and running time when RRC steganography embeds the secret message with various lengths (up to 8192 bits) on OPT-1.3b and Llama-2-7b. Similarly to the results of~\ref{table: scalability_results}, RRC steganography can achieve steady entropy utilization around 100\%.

\subsection{Supplementary Results of Comparison}
\label{sec: Supplementary Results of Comparison}
Table~\ref{table: Comparison of Steganalysis} reports steganalysis accuracies across three steganalysis models, three language models for steganography, and two steganographic methods (our RRC and the current state-of-the-art baseline, SparSamp). The experimental settings for steganalysis follow the details in Appendix~\ref{sec: steganalysis}.

Table~\ref{table: Comparison of perplexity} presents the median perplexities under various language models for steganography and two steganographic methods (with multinomial sampling included for reference). The embedded secret message was a random 128-bit sequence, with 1,000 samples per group. Median values are reported because average perplexities are heavily skewed by extreme outliers.

Overall, both methods yield near-random steganalysis accuracies (about 50\%) and comparable perplexity scores (close to those of multinomial sampling), due to the provable security property shared by RRC steganography and SparSamp. Besides,  we emphasize that lower perplexity does not imply higher imperceptibility, due to the conflict between perceptual and statistical imperceptibility (the Psic Effect) in steganography~\cite{9193914}.

% Please add the following required packages to your document preamble:
% \usepackage{multirow}

\subsection{Steganalysis}
\label{sec: steganalysis}
We generated 5,000 pairs of cover texts (via multinomial sampling) and steganographic texts, respectively implemented on GPT-2, OPT-1.3b, and Llama-2-7b. In each pair, the lengths (token number) of two texts are the same. The initial contexts for generation are the first 10 words from sequences randomly selected from the C4 dataset. For each experimental group, 5,000 texts are split in a 6:2:2 ratio to create the training, validation, and test sets.

For fine-tuning BERT or RoBERTa models, we use Adam~\cite{kingma2017adammethodstochasticoptimization} as the optimizer with a learning rate of $5 \times 10^{-5}$. The batch size is set to 2048, and the discriminator is trained for 20 epochs, running time of the whole training process is approximately 5 minutes.

\subsection{Ablation on Rotation Mechanism}
\label{sec: Ablation}
To quantify the effect of the rotation mechanism, we implemented a variant without rotation (\textit{vanilla RC}). As shown in Table~\ref{table: Ablation} (where the experimental setups follow Section~\ref{sec: Setup}), we can find that, in vanilla RC steganography, even though it can obtain the higher embedding speed compared to RRC steganography, 0 KL divergence cannot be achieved. Most importantly, theoretical security cannot be achieved.
Therefore, the provable security of vanilla RC steganography cannot be ensured as shown in the analysis in Section~\ref{sec: Security Issues}.

\section{Discussion on Practical Issues}
One practical issue of linguistic steganography is \textbf{tokenization inconsistency}. The stegotext $t_s$ generated by $\mathcal{S}_{\text{emb}}(\mathcal{M}, m_s)$ is essentially a sequence of tokens. Before transmission, the sender must detokenize this sequence using a tokenizer to produce the final stegotext. Consequently, during extraction $\mathcal{S}_{\text{ext}}(\mathcal{M}, t_s)$, any tokenization inconsistency may cause extraction to fail or yield an incorrect secret message. This problem can be avoided only in a few tokenizer-free linguistic steganographic approaches~\cite{math8091558,10831652}. Recently, several disambiguation methods have been proposed to mitigate this issue~\cite{nozaki-murawaki-2022-addressing,yan2023A,10831370,qi2024provably, yan2025addressingtokenizationinconsistencysteganography}. These methods are orthogonal to our proposed RRC steganography and can be compatible, since they operate on candidate pools prior to steganographic processing.

Another practical issue of linguistic steganography is \textbf{hardware indeterminism}~
\cite{atil2025nondeterminismdeterministicllmsettings}, whereby LLM outputs and the probability distribution of the next token given the same context can vary across different hardware settings. Such variability poses serious risks for reliable message extraction in real-world deployments, since even small shifts in probability values may result in incorrect extraction. This issue underscores the importance of explicitly addressing hardware-level robustness as an additional design dimension for practical steganographic systems, alongside imperceptibility and capacity, in future research.

\section{Samples of Texts}
We present examples of stegotexts generated by RRC steganography and non-steganographic texts generated by multinomial sampling. Each stegotext embeds a 128-bit random secret message. 
The initial context is ``\texttt{Occasionally when I get some free time, I'll do}.''
Following the approach of Ziegler et al.~\cite{ziegler-etal-2019-neural}, we terminate the generation process once the proposed method has finished embedding the message. In the following examples, for each language model, the non-steganographic text has the same token number as that of the corresponding stegotext.

\begin{tcolorbox}
[colback=gray!10!white, colframe=gray!80!black, title=\texttt{Texts generated by GPT-2}]
    \small
    \textbf{Stegotext:}
    
    \texttt{Occasionally when I get some free time, I'll do something that uses all of those sensors scanned at the bottom of the computer - search for something. But I don't know how to do that} \\

    \textbf{Non-steganographic text:}
    
    \texttt{Occasionally when I get some free time, I'll do some stretch and the lights rise over me, and will do some real experimenting. I didn't really think about art where}
\end{tcolorbox}

\begin{tcolorbox}
[colback=gray!10!white, colframe=gray!80!black, title=\texttt{Texts generated by OPT-1.3b}]
    \small
    \textbf{Stegotext:}
    
    \texttt{Occasionally when I get some free time, I'll do flat screen, planner style arrangement cards. It really highlights that the cards are supposed to be focused} \\

    \textbf{Non-steganographic text:}

    \texttt{Occasionally when I get some free time, I'll do that in survival. It's loads more fun to just play around and search ways\* to build a nest and be creative in }
\end{tcolorbox}

\begin{tcolorbox}
[colback=gray!10!white, colframe=gray!80!black, title=\texttt{Texts generated by Llama-2-7b}]
    \small
    \textbf{Stegotext:}
    
    \texttt{Occasionally when I get some free time, I'll do a quick Google search on a random topic that interests me (if I have one free not sitting in front of a computer screen!), and just see where my curiosity takes me. The first thing
    } \\

    \textbf{Non-steganographic text:}

    \texttt{Occasionally when I get some free time, I'll do a Google Searche for ""badminton"". It is refreshing not to find the many images that can be found with another more popular global pastime. My experiment of searching finds pictures }
\end{tcolorbox}

\end{document}